\documentclass[11pt]{article}

\usepackage[preprint]{acl}

\usepackage{times}
\usepackage{latexsym}

\usepackage[T1]{fontenc}
\usepackage[utf8]{inputenc}
\usepackage{microtype}
\usepackage{inconsolata}

\usepackage{graphicx}
\usepackage{hyperref}       
\usepackage{url}            
\usepackage{booktabs}       
\usepackage{amsfonts}       
\usepackage{nicefrac}       
\usepackage{microtype}      
\usepackage{xcolor}         
\usepackage{algorithm}
\usepackage[noend]{algpseudocode}
\usepackage{caption}
\usepackage{subcaption}
\usepackage{xcolor}
\usepackage{multirow}
\usepackage{adjustbox}

\usepackage[sectionbib]{chapterbib}
\usepackage{wrapfig}
\usepackage{amsmath}
\usepackage{amssymb}

\usepackage{enumitem}

\usepackage{amsthm}
\newtheorem{proposition}{Proposition}

\newcommand{\method}{AnLR-LoRA}

\title{One Rate Is Not Enough: \\
Adaptive Anisotropic Learning Rates for LoRA Fine-Tuning}
\author{Huiyi Wang\thanks{Corresponding authors.}, Daijiao Liu, Lina Yao, Dong Gong\footnotemark[1]\\
  University of New South Wales
\\
\texttt{ \small \{huiyi.wang,~daijiao.liu,~lina.yao,~dong.gong\}@unsw.edu.au}}

\begin{document}
\maketitle
\begin{abstract}
Low-rank adaptation (LoRA) has become the standard for parameter-efficient fine-tuning of large language models. Most LoRA variants follow a \emph{uniform-LR convention}, applying a single global learning rate across every rank-one component of every adapter. We show that this convention overlooks substantial within-module heterogeneity, where the rank-one components of a LoRA adapter update at highly uneven rates and low-velocity modules converge to concentrated singular spectra that underutilize the nominal rank budget. To address this, we propose an adaptive anisotropic learning-rate model that assigns each rank-one component its own effective learning rate, computed online from training-time signals and mean-normalized per module to preserve the global LR budget. AnLR-LoRA instantiates this model with two signals available during AdamW optimization, namely function-space velocity and Adam SNR, as a lightweight scheme with no extra trainable parameters. Across commonsense reasoning, natural language generation and visual instruction-tuning benchmarks, AnLR-LoRA consistently improves over LoRA while encouraging broader use of rank capacity, with gains that remain robust across a wide range of global learning rates and transfer cleanly to other LoRA variants.
\end{abstract}

\section{Introduction}

Large language models (LLMs) have become the foundation of modern natural language processing, but adapting them to downstream tasks via full 
\begin{figure}[htp]
    \centering
    \begin{subfigure}{0.48\linewidth}
        \centering
        \includegraphics[width=\linewidth]{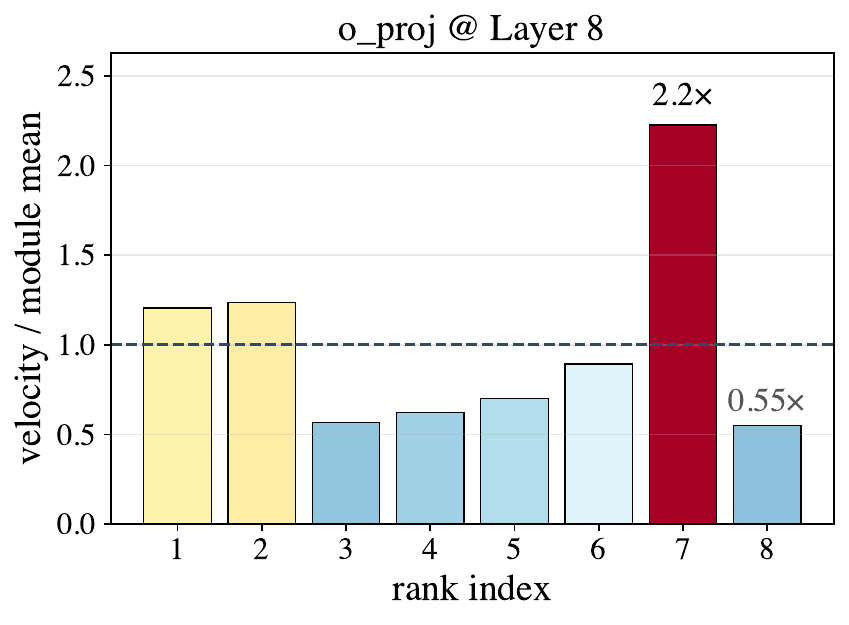}
    \end{subfigure}
    \hfill
    \begin{subfigure}{0.48\linewidth}
        \centering
        \includegraphics[width=\linewidth]{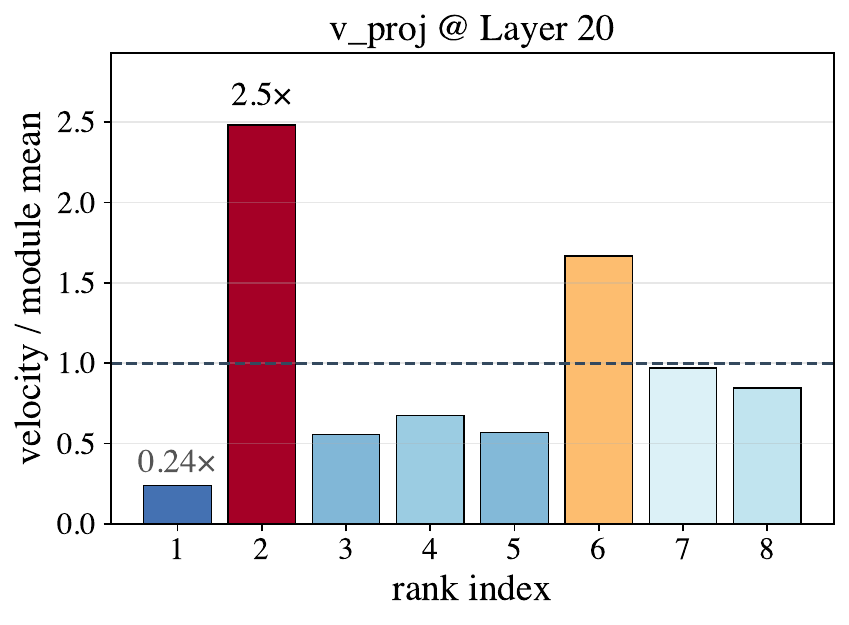}
    \end{subfigure}
    \caption{Per-rank-one update velocity of 
    LoRA modules. Within a single LoRA, the $r$ rank-one components do not update at the same rate.}
    \label{fig:fig_bars}
\end{figure}
fine-tuning is prohibitively expensive. Parameter-efficient fine-tuning (PEFT) methods address this cost by training only a small subset of parameters \citep{houlsby2019parameter, li2021prefix, jia2022visual}. Low-Rank Adaptation (LoRA) \citep{hu2021lora} has emerged as a standard choice for LLM adaptation, with its low-rank update parameterization offering a favorable balance between adaptation quality and computational cost. Yet recent work identifies the learning rate (LR) as the dominant factor in LoRA tuning~\citep{lee2026learning}, which makes the \emph{form} of the learning-rate convention, not just its value, a worthwhile object of study.

Standard LoRA training uses AdamW with a single global learning rate applied uniformly across every rank-one component of every adapter. We refer to this design as the \emph{uniform-LR convention}, which is inherited by most LoRA variants. Our analysis (Sec.~\ref{sec:motivation}), however, shows that under this convention the $r$ rank-one components within a single adapter exhibit anisotropic per-step velocities (Fig.~\ref{fig:fig_bars}). This anisotropy emerges early in training and persists throughout. This imbalance compounds over training, as modules whose rank-one components evolve slowly converge to a concentrated spectrum of the LoRA update $BA$, leaving much of the nominal rank budget underutilized. 
The anisotropy is \emph{within-module} and arises only during training, and it cannot be corrected by tuning a single global LR. This suggests that both the granularity and adaptivity of LoRA's LR allocation deserve closer examination.

This anisotropy in LoRA training motivates assigning learning rates at the level of individual rank-one components and adjusting them as training proceeds. However, existing approaches still allocate learning rates at coarser granularities. LoRA+~\citep{loraplus} partially relaxes the uniform-LR convention by assigning separate learning rates to matrices $A$ and $B$, but the adjustment remains at the matrix level and is fixed analytically before training. Beyond LoRA+, per-matrix and per-adapter schemes~\citep{chen2026learning, huang2024allora} adjust the learning rate above the rank-one level, while methods that operate at the rank-one level intervene through gating~\citep{sora} or singular-triplet pruning~\citep{adalora} rather than learning-rate scheduling. No existing method assigns a distinct, online adaptive learning rate to each rank-one component within a LoRA adapter.

We propose an \emph{adaptive anisotropic learning-rate} model for LoRA fine-tuning, with a principled instantiation, Anisotropic Learning-Rate LoRA (AnLR-LoRA). AnLR-LoRA assigns per-rank-one multipliers that redistribute a fixed global LR across the $r$ rank-one components of each adapter, driven online by two training-time signals: \emph{velocity}, the rate of change of each rank-one contribution to the adapter update, which captures function-space activity; and \emph{Adam SNR}, the signal-to-noise ratio of each component's parameters, which captures parameter-space confidence. The two signals are fused log-additively and mean-normalized per module, then applied as a post-step scaling of the AdamW update, equivalent to running AdamW with an effective per-component LR (Proposition~\ref{prop:post-step}). These signals adaptively encourage learning on low-velocity rank-one components that follow a reliable update direction. AnLR-LoRA thus promotes effective learning with no architectural changes and negligible computational overhead.

Our contributions are summarized as follows:
\begin{itemize}
    \item We identify \emph{anisotropic} learning rates, with distinct rates per rank-one component, as the natural next granularity in the adaptive-LR lineage and show per-rank-one dynamics are anisotropic within modules, persistent through training and tied to a measurable spectral concentration of $BA$ below its nominal rank.
    \item We propose an adaptive anisotropic LR model that redistributes a fixed LR budget across rank-one components using training-time signals, and instantiate it as AnLR-LoRA with signals of velocity and Adam SNR ratio. AnLR-LoRA enables low-velocity rank-one components with reliable update directions to actively engage in the training process.
    \item Extensive experiments show that adaptive anisotropic learning rate encourages utilization of rank budget and improves the adaptation of LoRA across various settings. This benefit remains robust under varying learning rate and transfers to LoRA variants.
\end{itemize}

\section{Related Work}
\label{sec:related}
\subsection{Parameter-Efficient Fine-Tuning}

Parameter-efficient fine-tuning (PEFT) reduces fine-tuning cost by training only a small subset of model parameters. Representative approaches include adapter modules inserted between transformer layers \citep{houlsby2019parameter, chen2022adaptformer}, prompt and prefix tuning that prepend learnable tokens to the input or to the attention state \citep{li2021prefix, jia2022visual}, and methods that perform full-parameter training inside a low-rank gradient subspace \citep{zhao2024galore, chen2026fira}. PEFT modules can also be composed into expandable mixtures of experts for continual learning \citep{wang2025self, zhao2026token}. We focus on low-rank adaptation, the standard PEFT approach for LLMs, reviewing its variants and optimization choices below.

\subsection{Architectural Variants of LoRA}

LoRA \citep{hu2021lora} introduces a low-rank update to a frozen pretrained model, parameterized by two factor matrices initialized so that the update is zero at the start of training. DoRA modify this parameterization by separating magnitude and direction \citep{dora}. Initialization-based variants place the factors on informative starting directions, from the principal or minor singular components \citep{pissa, milora} to the first-step gradient \citep{wang2024loraga, zhang2026primacy, lora-dash}. Some methods adjust the effective rank during training, such as SoRA \citep{sora} with learnable gates, AdaLoRA \citep{adalora} with singular-triplet pruning, and DyLoRA \citep{dylora} with sub-rank sampling. Low-rank structure is similarly exploited in continual learning, where rank-one components serve as incremental experts \citep{lu2026little} and rank minimization acts as an implicit regularizer against forgetting \citep{lu2024take}. Across these designs, all retain LoRA's single global learning rate, applied uniformly across all parameters.

\begin{figure*}[htp]
    \centering
    \begin{subfigure}{0.33\linewidth}
        \centering
        \includegraphics[width=\linewidth]{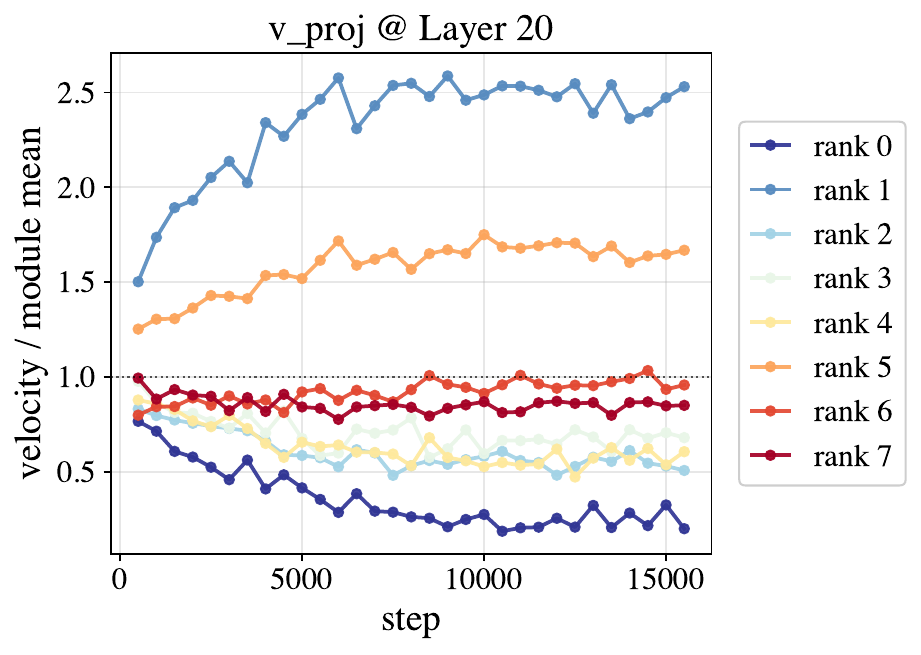}
        \caption{Visualization of per-rank-one velocity across training steps. 
        }
        \label{fig:fig_consistency}
    \end{subfigure}
    \hfill
    \begin{subfigure}{0.31\linewidth}
        \centering
        \includegraphics[width=\linewidth]{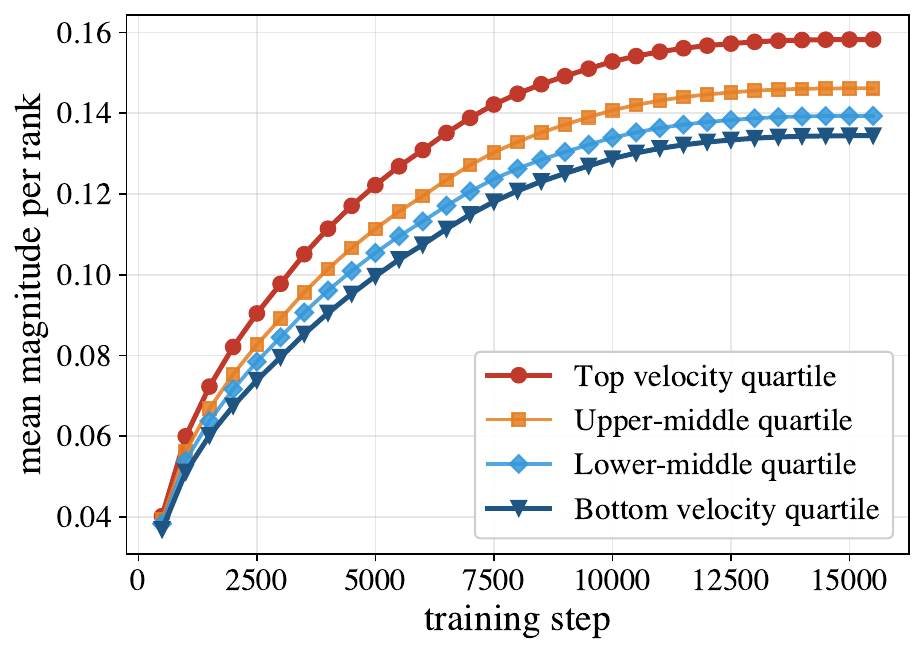}
        \caption{Rank-one velocity vs magnitude accumulated over training.}
        \label{fig:fig_rich_get_richer}
    \end{subfigure}
    \hfill
    \begin{subfigure}{0.31\linewidth}
        \centering
        \includegraphics[width=\linewidth]{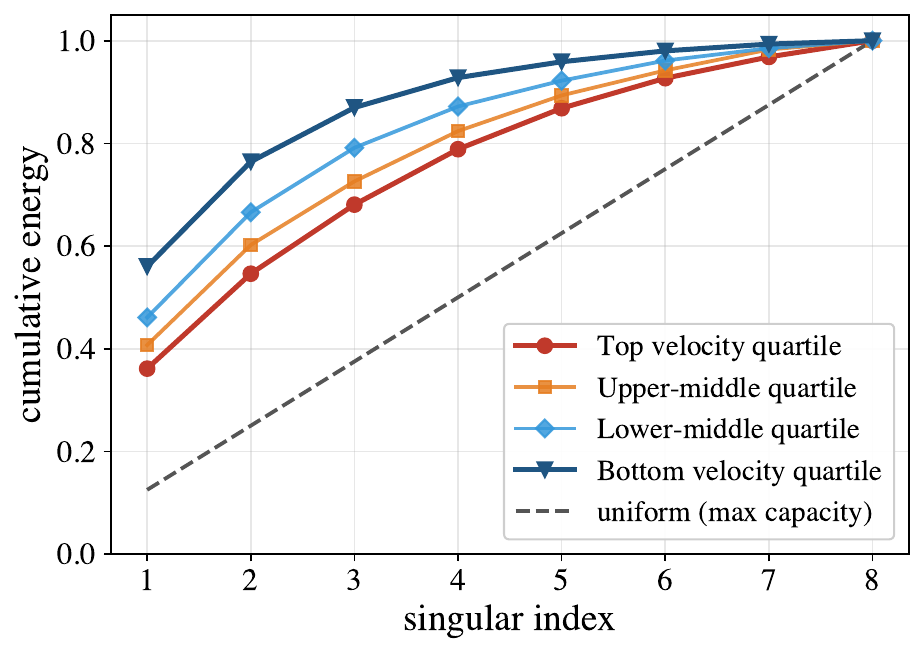}
        \caption{Module velocity vs singular spectrum concentration at convergence.}
        \label{fig:fig_motivation_quartile_spectrum}
    \end{subfigure}
    \caption{Per-rank-one anisotropy in LoRA training on math reasoning. (a) The within-module velocity ordering is established early and persists in training. (b) High-velocity components accumulate larger contribution magnitude throughout training. (c) Lower mean per-rank-one velocity is associated with more concentrated singular spectra of $BA$ at convergence, under-utilizing the available rank budget.}
    \label{fig:fig_velocity_analysis}
\end{figure*}

\subsection{Optimization and Hyperparameter Choices for LoRA}

A parallel line studies how LoRA is trained rather than parameterized. The learning rate is the load-bearing hyperparameter across LoRA variants once well-tuned \citep{lee2026learning}. \citet{loraplus} observe that the two factor matrices play asymmetric roles during fine-tuning and assign each its own learning rate. \citet{chen2026learning} characterize how the global rate should scale with adapter rank, and rsLoRA \citep{rslora} introduces a rank-stabilized output scaling. Other methods modify the optimizer update itself: LoRA-Pro \citep{lorapro}, AltLoRA \citep{yu2026altlora} and RefLoRA \citep{zhang2026reflora} adjust the per-step gradient or factorization to better approximate a full fine-tuning step. These works operate at module or matrix granularity and none assigns a separate learning rate per rank component or derives one from training-time signals. LoRA+ \citep{loraplus} is closest to ours in showing that per-component learning rates outperform the single-LR convention, but its two matrix-level learning rates are fixed at training start. As we show in Sec.~\ref{sec:motivation}, rank-one components within a matrix remain heterogeneous throughout training, so per-matrix allocation discards information preserved by per-rank learning rates. We derive these rates from two training-time signals: the rate of change of each rank's contribution and the consistency of its Adam gradient.

\section{Analysis of Rank-One Training Dynamics in LoRA}
\label{sec:motivation}
This analysis provides the empirical basis for the intervention proposed in Sec.~\ref{sec:method}. We establish three observations from a LoRA fine-tuning run of LLaMA2-7B on MetaMathQA. The analysis covers $224$ LoRA modules in total. We refer to the Frobenius-norm rate of change of each rank-one contribution $B_{:,k}A_{k,:}$ as its \emph{per-rank-one velocity}, defined formally in Sec.~\ref{sec:method:velocity}. We further define its \emph{per-rank-one magnitude} as the cumulative Frobenius norm of $B_{:,k}A_{k,:}$ at each training step. These measures allow us to characterize how adaptation is distributed across rank-one components, both dynamically during optimization and cumulatively over training.

\subsection{Per-Rank-One Velocity is Anisotropic within Each Module}
\label{sec:motivation:anisotropic}

Within a single LoRA adapter, the $r$ rank-one components do not update at the same rate. Fig.~\ref{fig:fig_bars} shows the normalized update velocity of individual rank-one components in LoRA modules, with the dashed line indicating the module-wise mean. The update activity is clearly uneven. Some ranks move much faster than the module average, whereas others remain substantially slower. Across 224 LoRA modules in our analysis, the median max/min velocity ratio is $2\times$, the 90th percentile reaches $4\times$, and the most imbalanced 1\% exceed $8\times$. This heterogeneity appears across different projection layers, suggesting that LoRA rank-one components are not optimized uniformly under a shared learning rate. 
This anisotropy is \emph{within-module}: every rank inside a LoRA adapter shares the same global learning rate by construction, 
so the imbalance cannot be addressed by conventional uniform learning-rate scheduling. A more direct mechanism is therefore to redistribute the learning rate among the $r$ rank-one components within each module, rather than increasing the overall learning rate.

\subsection{The Anisotropy is Established Early and Persists}
\label{sec:motivation:locked-in}

The velocity heterogeneity is not a transient effect. Within each module, the rank-one components that are the most and the least active early in training remain so through to convergence. Fig.~\ref{fig:fig_consistency} shows per-rank-one velocity trajectories in a LoRA module, with each rank colored by its index. The ordering is established early in training, within approximately the first 1k steps and preserved thereafter. The persistent velocity ordering translates directly to a contribution-magnitude ordering. Fig.~\ref{fig:fig_rich_get_richer} shows the mean contribution magnitude of the top-velocity quartile pulls ahead in early training and the gap persists throughout the training process. Under uniform LR there is no mechanism to redistribute capacity, so the imbalance compounds. The ordering is \emph{not predictable from $A^{(0)}, B^{(0)}$ alone}, but is stabilized within the early training stage. The right per-rank-one allocation should therefore be discovered during training and applied throughout, not preset analytically.

\subsection{Low-Velocity Modules Under-Utilize their Nominal Rank Budget}
\label{sec:motivation:concentration}

The within-module magnitude imbalance has a further consequence: modules whose rank-one components evolve most slowly under uniform LR end up with the most concentrated singular spectra in the learned composite update $BA$, packing their squared-Frobenius energy into only a few singular directions and leaving the rest of the nominal rank budget effectively unused. As shown in Fig.~\ref{fig:fig_motivation_quartile_spectrum}, low-velocity modules exhibit a much sharper cumulative energy curve than high-velocity modules, indicating that their learned update is dominated by a small number of directions. In other words, 
despite having the same nominal rank budget, slow-moving modules exploit only a fraction of the available capacity under uniform learning rate. 

The concentration is induced by the optimization process rather than being an inherent property of the downstream update. Full fine-tuning provides a reference for what an update at this rank can look like. In a full fine-tuning experiment on Commonsense170K with LLaMA2-7B, the best rank-32 approximation to the full fine-tuning update has an effective rank of 27.2 out of 32, whereas LoRA at the same rank learns an update with an effective rank of only 12.7. Concentration under uniform LR thus leaves rank capacity under-used and limits how much of the update energy a LoRA adapter can capture, as discussed in Appendix~\ref{appendix:spectrum}. The spectral concentration is the \emph{handle} our intervention pulls on: by redistributing learning rate toward slower rank-one components, AnLR-LoRA aims to encourage broader use of the available rank capacity. We verify in Sec.~\ref{sec:exp} whether this translates into downstream task gains.

\section{Methodology}
\label{sec:method}
Sec.~\ref{sec:motivation} shows that LoRA rank-one components exhibit anisotropic training dynamics within each module, and that this anisotropy emerges early and persists throughout fine-tuning. We now formalize a learning-rate scheme that responds to this anisotropy. The proposed \emph{adaptive anisotropic LR model} attaches a per-rank-one multiplier $s_k^{(t)}$ to the global learning rate within each adapter, redistributing the module-level LR across the $r$ rank-one components. AnLR-LoRA instantiates this model by computing $s_k^{(t)}$ online from the velocity and Adam SNR signals available during Adam optimization.

\subsection{The Uniform-LR Convention}
\label{sec:method:baseline}

Standard LoRA fine-tuning trains $\theta = \{A, B\}$ for each adapter under AdamW with a single global learning rate schedule $\eta_t$ and decoupled weight-decay coefficient $\lambda$. At step $t$, AdamW maintains exponential moving averages of the gradient and its element-wise square,
\begin{equation*}
    m^{(t)} = \beta_1 m^{(t-1)} + (1 - \beta_1) g^{(t)}, 
\end{equation*}
\begin{equation*}
    v^{(t)} = \beta_2 v^{(t-1)} + (1 - \beta_2) g^{(t)} \odot g^{(t)},
\end{equation*}
with bias-corrected estimates $\hat m^{(t)} = m^{(t)} / (1 - \beta_1^t)$ and $\hat v^{(t)} = v^{(t)} / (1 - \beta_2^t)$. The AdamW update direction is
\begin{equation}
    u^{(t)} \;=\; \frac{\hat m^{(t)}}{\sqrt{\hat v^{(t)}} + \varepsilon} \;+\; \lambda\, \theta^{(t)},
    \label{eq:adamw-direction}
\end{equation}
the sum of a preconditioned momentum term and a decoupled weight-decay offset. The parameter update is
\begin{equation}
    \theta^{(t+1)} \;=\; \theta^{(t)} - \eta_t\, u^{(t)}.
    \label{eq:adamw-update}
\end{equation}

In this formulation, the \emph{per-parameter effective learning rate} is $\eta_{t, \theta_i} = \eta_t$ for every parameter $\theta_i$ in every rank-one component of every adapter, because the only place $\eta_t$ enters is as the outer scalar multiplying $u^{(t)}$. We call this the \emph{uniform-LR convention}. It assumes both \emph{uniform granularity}, where all $r$ rank-one components of an adapter share a single LR, and \emph{uniform schedule shape}, where the same global LR schedule $\eta_t$ is applied to every parameter.

\subsection{Adaptive Anisotropic Learning-rate}
\label{sec:method:model}

The analysis in Sec.~\ref{sec:motivation} motivates two changes to the uniform-LR convention: adaptivity and granularity. \emph{Adaptivity} means that the learning rate should depend on the training-time state, rather than being fixed by a global schedule alone. This follows the broader principle of preconditioned optimization~\citep{kingma2014adam,duchi2011adaptive,you2017large,you2019large}, but has not been explored at the level of individual LoRA rank-one components. \emph{Granularity} refers to the unit of learning-rate control, which should match the structure of the LoRA update. Since the LoRA update decomposes into $\Delta W = BA = \sum_{k=1}^{r} B_{:,k} A_{k,:}$, the rank-one component is the finest structured unit of the LoRA parameterization and is where the empirical anisotropy emerges. Per-parameter~\citep{kingma2014adam,huang2024allora} and per-matrix~\citep{loraplus} granularities leave it unaddressed.

We attach a positive, time-varying multiplier $s_k^{(t)}$ to each rank-one component $\theta_k = (A_{k,:},\, B_{:,k})$. Let $u_k^{(t)}$ denote the restriction of the AdamW direction in Eq.~\ref{eq:adamw-direction} to the parameters of component $k$. The adaptive-LR model uses a per-component \emph{effective learning rate} $\eta_{t,k} = s_k^{(t)} \eta_t$ with the update
\begin{equation}
    \theta_k^{(t+1)} \;=\; \theta_k^{(t)} - s_k^{(t)}\, \eta_t\, u_k^{(t)},
    \quad k = 1, \dots, r,
    \label{eq:adaptive-update}
\end{equation}
which multiplies $\eta_t$ by $s_k^{(t)}$  in Eq.~\ref{eq:adamw-update} and leaves AdamW's state unchanged. 
Guided by these observations, we impose two requirements on the multipliers. First, because the anisotropy is within-module, the multipliers should \emph{redistribute} the module-level learning rate rather than change its average scale. We therefore enforce $ \mathbb{E}_k\!\big[ s_k^{(t)} \big] \;=\; 1$ within each module. Second, because the rank ordering emerges during training, the multipliers should be computed online from training-time state, $s_k^{(t)} = \phi(\mathcal{S}_k^{(t)})$, without introducing extra trainable parameters or additional forward/backward passes. Vanilla LoRA is recovered as the special case $s_k^{(t)} = 1$, while coarser schemes such as LoRA+ can be viewed as fixed, matrix-level alternatives. AnLR-LoRA instantiates this model by specifying the state $\mathcal{S}_k^{(t)}$ and the mapping function $\phi$ in the following section.

\subsection{\method{}: Instantiating the Model with Two Training-Time Signals}
\label{sec:method:instantiation}

AnLR-LoRA instantiates the adaptive anisotropic LR model by defining the state $\mathcal{S}_k^{(t)}$ with two complementary training-time signals and specifying the mapping $\phi$ as a log-additive and mean-normalized function.

\paragraph{What information should the multiplier use?}
To decide whether to boost or dampen rank-one component $k$, $s_k^{(t)}$ should capture two aspects of its training dynamics: how actively the component is currently changing, and whether its change direction is reliable. These two aspects are complementary. A slow-moving component may follow a consistent but under-exploited direction, in which case increasing its LR is beneficial. Alternatively, it may receive noisy gradients with no stable direction, in which case boosting it too much would amplify noise. We therefore use velocity to measure component activity and Adam SNR to measure update reliability.

\paragraph{Velocity: function-space activity.}
\label{sec:method:velocity}
The $k$-th rank-one component contributes the rank-one matrix $B_{:,k} A_{k,:}$ to the LoRA update. We define its \emph{velocity} as the Frobenius norm of the instantaneous change in this contribution,
\begin{equation}
  \nu_k \;=\; \left\| \frac{d}{dt}\big( B_{:,k} A_{k,:} \big) \right\|_F,
  \label{eq:velocity-cont}
\end{equation}
approximated from the current LoRA gradients $\nabla_A L, \nabla_B L$ via the product rule:
\begin{equation}
  \nu_k^{(t)} \;=\; \left\| (\nabla_B L)_{:,k}^{(t)}\, A_{k,:}^{(t)} \;+\; B_{:,k}^{(t)}\, (\nabla_A L)_{k,:}^{(t)}
\right\|_F.
  \label{eq:velocity-discrete} 
\end{equation}
  
To reduce step-level noise, we maintain an EMA-smoothed velocity $\bar{\nu}_k^{(t)} = \rho \bar{\nu}_k^{(t-1)} + (1-\rho)\nu_k^{(t)}$.
Velocity is defined on the \emph{output} of the rank-one component rather than on its parameters, so it captures the multiplicative coupling between $A_{k,:}$ and $B_{:,k}$, which is not reflected by per-parameter statistics alone. It is computable from quantities the optimizer step already produces, without additional forward/backward passes.

\paragraph{Adam SNR: parameter-space confidence.}
\label{sec:method:snr}
Let $\hat{m}$ and $\hat{v}$ denote Adam's bias-corrected first and second moment estimates. 
For each parameter, the element-wise SNR is defined as $\mathrm{snr}(\theta_i) =\frac{|\hat{m}_i|}{\sqrt{\hat{v}_i} + \varepsilon}$,
where $\varepsilon$ is a small constant for numerical stability. For rank component $k$, we aggregate the SNR values over the corresponding row of $A$ and column of $B$ and obtain
\begin{equation}
    \mathrm{snr}_k
    =
    \left(
    \left\|
    \frac{|\hat{m}_{A_{k,:}}|}
    {\sqrt{\hat{v}_{A_{k,:}}}+\varepsilon}
    \right\|_2^2
    +
    \left\|
    \frac{|\hat{m}_{B_{:,k}}|}
    {\sqrt{\hat{v}_{B_{:,k}}}+\varepsilon}
    \right\|_2^2
    \right)^{1/2}.
\label{eq:snr}
\end{equation}

High $\mathrm{snr}_k$ indicates that the gradient direction has been consistent across recent batches, identifying a stable update direction, while low $\mathrm{snr}_k$ indicates noisy and less reliable updates. The SNR can be obtained directly from AdamW's optimizer state and \method{} merely \emph{promotes} this quantity from an implicit step modulator to an explicit per-component meta-signal.

\begin{table*}[htp]
\centering
\resizebox{\linewidth}{!}{%
\begin{tabular}{llc cccccccc c}
\toprule
\textbf{Model} & \textbf{Method} & \textbf{\#Params (\%)} & \textbf{BoolQ} & \textbf{PIQA} & \textbf{SIQA} & \textbf{HellaSwag} & \textbf{WinoGrande} & \textbf{ARC-e} & \textbf{ARC-c} & \textbf{OBQA} & \textbf{Avg.} \\
\midrule
\multirow{4}{*}{\shortstack[l]{LLaMA2-7B}}
 & LoRA            & 0.83 & 72.2 & \textbf{84.6} & 77.4 & 90.1 & 82.6 & 81.2 & 67.4 & 64.8 & 77.5 \\
 & rsLoRA          & 0.83 & 71.1 & 81.6 & 78.8 & 87.7 & 81.6 & 82.6 & 67.2 & 81.4 & 79.0 \\
 & LoRA+           & 0.83 & 70.7 & 81.8 & 78.8 & 89.5 & 82.1 & 83.8 & 68.6 & 79.8 & 79.4  \\
 & AnLR-LoRA      & 0.83 & \textbf{72.5} & 84.1 & \textbf{78.9} & \textbf{91.4} & \textbf{82.8} & \textbf{85.0} & \textbf{71.3} & \textbf{82.4} & \textbf{81.0} \\
\midrule
\multirow{4}{*}{\shortstack[l]{LLaMA3-8B}}
 & LoRA  & 0.70 & 70.8 & 85.2 & 79.9 & 91.7 & 84.3 & 84.2 & 71.2 & 79.0 & 80.8 \\
 & rsLoRA          & 0.70 & 72.0 & 85.8 & 79.7 & 92.8 & 83.7 & 85.9 & 73.7 & 81.2 & 81.9 \\
 & LoRA+           & 0.70 & 73.8 & \textbf{88.5} & 80.0 & 95.1 & 86.8 & \textbf{89.9} & 78.6 & \textbf{85.6} & 84.8 \\
 & AnLR-LoRA      & 0.70 & \textbf{75.2} & 88.2 & \textbf{80.5} & \textbf{95.3} & \textbf{87.2} & \textbf{89.9} & \textbf{79.6} & 85.0 & \textbf{85.1} \\
\midrule
\multirow{4}{*}{\shortstack[l]{Qwen2.5-7B}}
 & LoRA            & 0.71 & 64.3 & \textbf{89.8} & 80.5 & 93.7 & 84.5 & 95.0 & 86.7 & \textbf{91.4} & 85.7 \\
 & rsLoRA          & 0.71 & 73.8 & 88.4 & 79.3 & 93.8 & 83.2 & 93.1 & 84.1 & 89.4 & 85.6 \\
 & LoRA+           & 0.71 & 66.2 & 88.7 & 79.4 & 95.0 & 85.2 & 94.4 & 86.4 & 90.2 & 85.7 \\
 & AnLR-LoRA      & 0.71 & \textbf{74.3} & 88.7 & \textbf{80.6} & \textbf{95.3} & \textbf{86.9} & \textbf{95.1} & \textbf{87.5} & 90.8 & \textbf{87.4} \\
\bottomrule
\end{tabular}%
}
\caption{Commonsense reasoning accuracy (\%) with LLaMA2-7B, LLaMA3-8B and Qwen2.5-7B. 
}
\label{tab:commonsense}
\end{table*}

\paragraph{Log-additive fusion.}
The two signals are combined multiplicatively in the multiplier space. Velocity suppresses the multiplier for fast-moving components, while SNR increases it for components with reliable update directions. This suggests an additive formulation in log space. After per-module mean-normalization on $\bar{\nu}_k$ and $\mathrm{snr}_k$, we obtain $\tilde\nu_k$ and $\widetilde{\mathrm{snr}}_k$, and the \method{} multiplier is defined as
\begin{equation}
    \log s_k^{(t)} \;=\; \alpha\, \log \tfrac{1}{\tilde \nu_k^{(t)}} \;+\; \beta\, \log \widetilde{\mathrm{snr}}_k^{(t)}.
    \label{eq:fusion}
\end{equation}
The velocity term boosts slow-moving components and dampens fast-moving ones, while the SNR term favors components with more reliable update directions. For numerical stability, we clamp the log-multiplier to $[-\log\kappa, \log\kappa]$ with a clamp bound $\kappa>1$. The resulting multipliers are then normalized within each module to satisfy $\mathbb{E}_k[\eta_t \cdot s_k] = \eta_t$. In practice, we activate anisotropic scheduling after a short warmup period, during which all multipliers are set to $s_k^{(t)}=1$. This avoids using unstable early-step velocity and SNR estimates before the optimizer statistics become reliable. Thus, AnLR-LoRA redistributes the learning rate across rank components rather than amplifying the overall learning rate.

\subsection{Application of Per-Rank-One LR via Post-Step Delta Scaling}
\label{sec:method:implementation}

We apply the effective per-component learning rate $s_k^{(t)}\eta_t$ through post-step delta scaling. At each step, we first perform a standard AdamW update 
\begin{equation}
    \hat{\theta}_k^{(t+1)}
    =
    \theta_k^{(t)} - \eta_t u_k^{(t)}.
\end{equation}
We then rescale the resulting per-component parameter delta:
\begin{equation}
    \theta_k^{(t+1)}
    =
    \theta_k^{(t)}
    +
    s_k^{(t)}
    \bigl(
    \hat{\theta}_k^{(t+1)} - \theta_k^{(t)}
    \bigr)
    =
    \theta_k^{(t)}
    -
    s_k^{(t)} \eta_t u_k^{(t)} .
    \label{eq:post-step-update}
\end{equation}
This update directly realizes the adaptive-LR model in Eq.~\ref{eq:adaptive-update}. For fixed AdamW direction $u_k^{(t)}$, it is equivalent to using the effective per-component learning rate $s_k^{(t)}\eta_t$.

We use post-step scaling for two reasons. First, $s_k^{(t)}$ is computed from signals of the current optimizer step and cannot be preset in advance. Second, scaling the parameter delta avoids the cancellation that would occur if the gradient were scaled before AdamW normalization. The implementation is lightweight. It requires one parameter snapshot and one element-wise rescaling per step, while leaving AdamW's first- and second-moment states unchanged.

\begin{table}[t]
\centering
\resizebox{0.95\linewidth}{!}{%
\begin{tabular}{lccc}
\toprule
\textbf{Method}  & \textbf{MT-Bench} & \textbf{GSM8K} & \textbf{HumanEval} \\
\midrule
LoRA               & 6.11          & 59.59          & 24.39          \\
    LoRA+             & 5.98             & 60.27          & 24.39        \\
    AnLR-LoRA          & \textbf{6.36}           & \textbf{61.87}  & \textbf{28.05}    \\
\bottomrule
\end{tabular}
}
  \caption{Evaluation results for dialogue, math and coding with LLaMA2-7B.}
  \label{tab:nlg_results}
\end{table}

\begin{table*}[t]
  \centering
  \small
  \setlength{\tabcolsep}{5pt}
  \begin{tabular}{lccccccccc}
    \toprule
    \textbf{Method} & \textbf{\#Params (\%)}
      & \textbf{VQA$^{\text{v2}}$} & \textbf{GQA} & \textbf{VizWiz} & \textbf{SQA} & \textbf{VQA$^{\text{T}}$} & \textbf{POPE} & \textbf{MMBench} & \textbf{Avg.} \\
    \midrule
    FT$^\dagger$ & 100 & 78.50 & 61.90 & 50.00 & 66.80 & 58.20 & 85.90 & 64.30 & 66.50 \\
    \midrule
    LoRA & 4.61 & \textbf{79.02}  & \textbf{63.34} & 47.81 & 69.06 & 57.56 & 86.09 & \textbf{74.96} & 68.26 \\
    LoRA+ & 4.61 & 78.59 & 62.86 & 47.18 & 66.39 & 56.56 & 86.88 & 73.36 & 67.40 \\
    AnLR-LoRA & 4.61 & \textbf{79.02} & 63.07 & \textbf{50.88} & \textbf{69.11}  & \textbf{58.21}  & \textbf{86.89} & 74.43 & \textbf{68.80} \\
    \bottomrule
  \end{tabular}
  \caption{%
    Results on visual
    instruction tuning with LLaVA-1.5-7B. FT results cited from \citet{dora}.
    }
  \label{tab:visual_instruction_tuning}
\end{table*}

\begin{figure*}
    \centering
    \begin{subfigure}{0.31\linewidth}
        \centering
        \includegraphics[width=\linewidth]{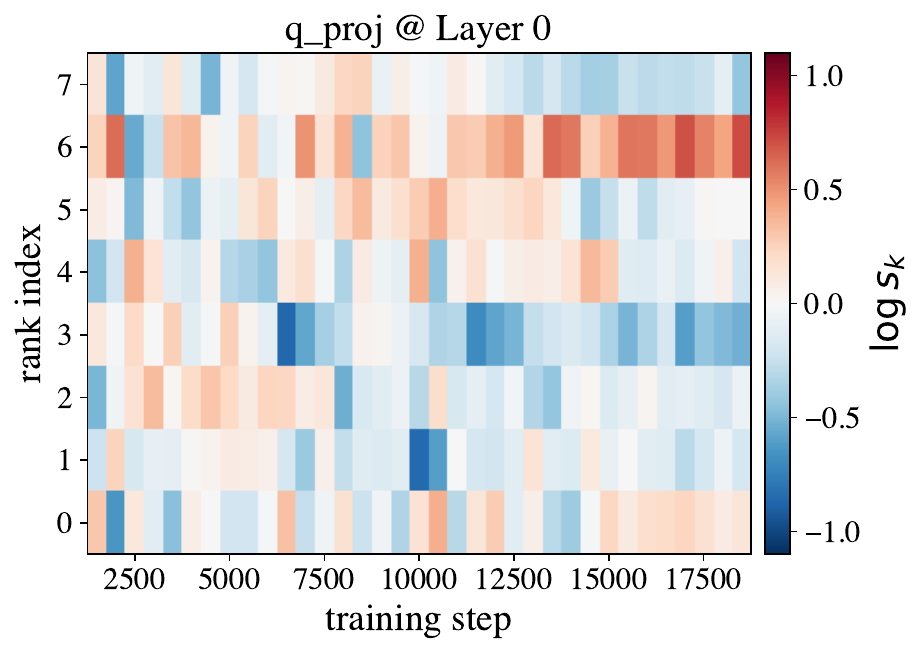}
        \caption{Per-rank-one LR multiplier $\log s_k$ over training.}
        \label{fig:efflr_heatmap}
    \end{subfigure}
    \hfill
    \begin{subfigure}{0.31\linewidth}
        \centering
        \includegraphics[width=\linewidth]{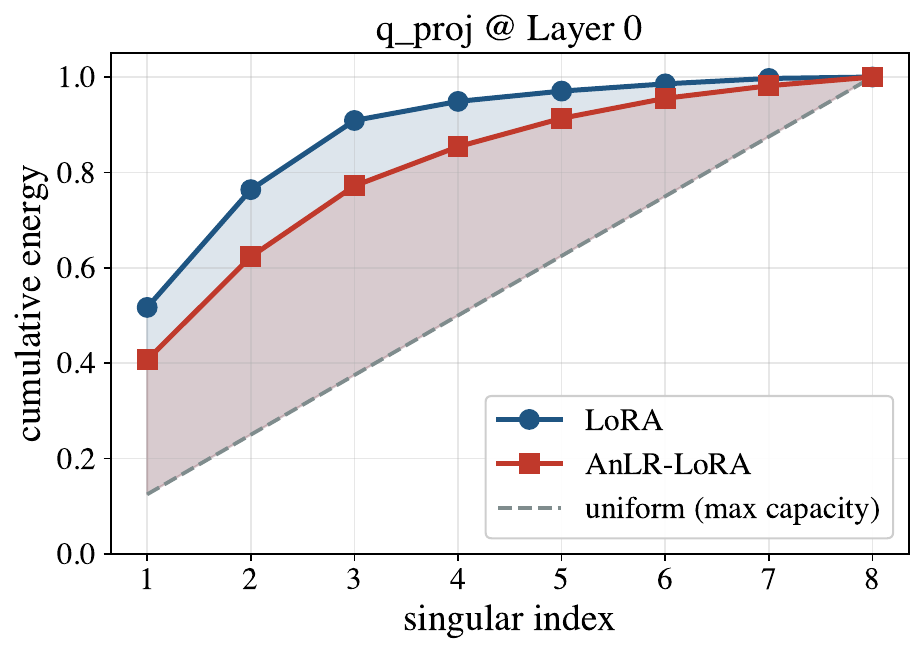}
        \caption{Singular spectrum concentration of the update $BA$ at convergence.}
        \label{fig:flatten_cum}
    \end{subfigure}
    \hfill
    \begin{subfigure}{0.31\linewidth}
        \centering
        \includegraphics[width=\linewidth]{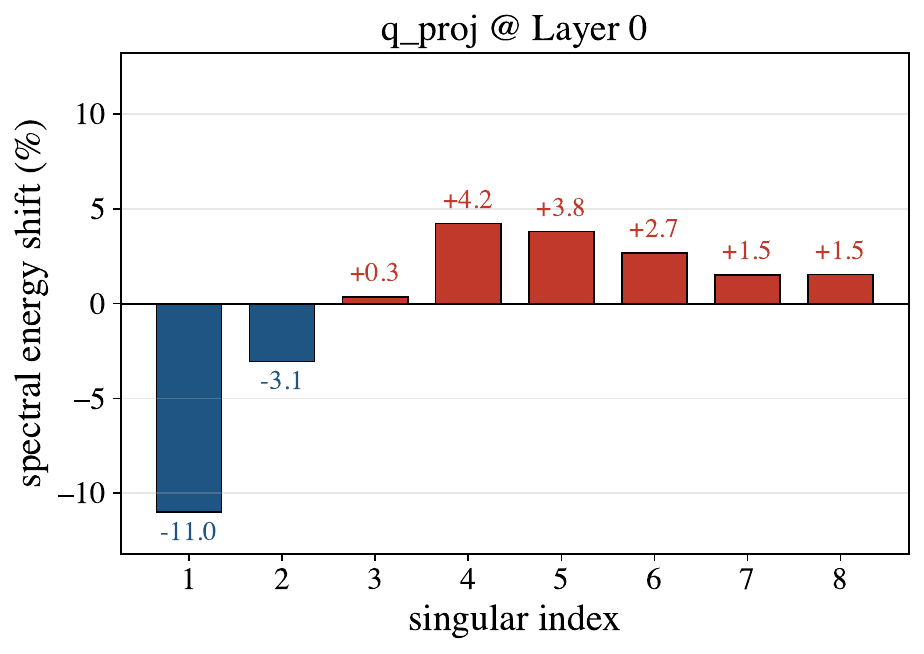}
        \caption{Per-index spectral energy shift from LoRA to AnLR-LoRA.}
        \label{fig:spectral_energy_shift}
    \end{subfigure}
    \caption{Effect of AnLR-LoRA's anisotropic learning rate.}
    \label{fig:lr_sched_effect}
\end{figure*}

\begin{figure}
    \centering
    \includegraphics[width=0.75\linewidth]{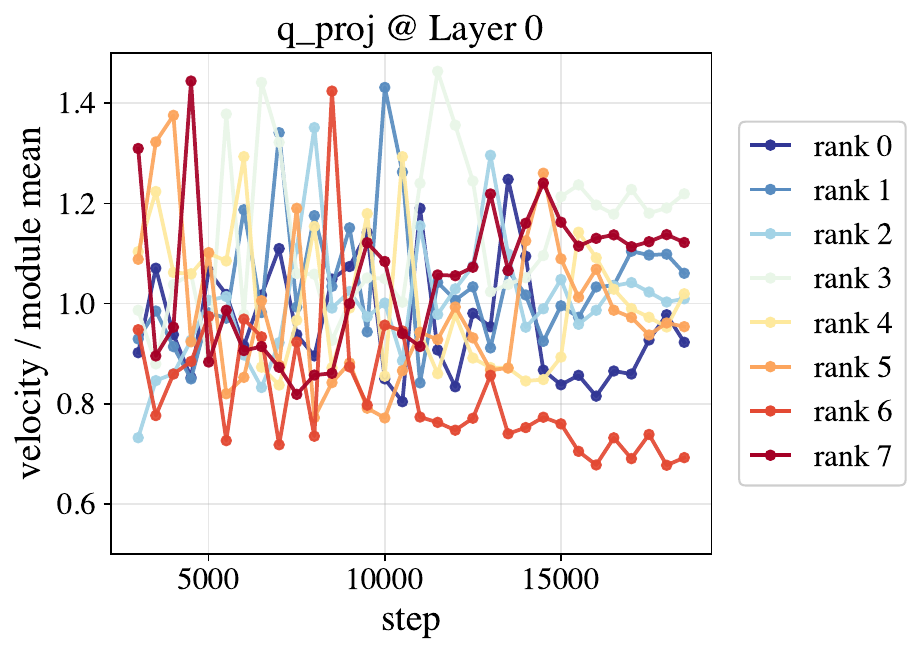}
    \caption{Visualization of per-rank-one velocity of AnLR-LoRA.}
    \label{fig:per_rank_lora_consistency}
\end{figure}

\section{Experiments}
\label{sec:exp}

We evaluate AnLR-LoRA across commonsense reasoning, math reasoning, code generation, dialogue and visual instruction tuning, spanning $18$ datasets across four backbones, including LLaMA2-7B~\citep{touvron2023llama}, LLaMA3-8B~\citep{llama3modelcard}, Qwen2.5-7B~\citep{qwen2025qwen25technicalreport} and LLaVA-1.5-7B~\citep{liu2024improvedbaselinesvisualinstruction}. We compare AnLR-LoRA against LoRA~\citep{hu2021lora} as well as the optimization-related variants rsLoRA~\citep{rslora} and LoRA+~\citep{loraplus}. All experiments run on NVIDIA H100 and H200 GPUs. Full implementation details are in Appendix~\ref{app:impl_details}.

\subsection{Main Results}
\label{sec:exp:main}
\paragraph{Commonsense Reasoning.}
We finetune three language backbones on Commonsense170K~\citep{hu2023llm} and evaluate on eight tasks: BoolQ~\citep{clark2019boolq}, PIQA~\citep{bisk2020piqa}, SIQA~\citep{sap2019socialiqa}, HellaSwag~\citep{zellers2019hellaswag}, WinoGrande~\citep{sakaguchi2021winogrande}, ARC-e, ARC-c~\citep{clark2018think} and OBQA~\citep{mihaylov2018can}. As shown in Tab.~\ref{tab:commonsense}, AnLR-LoRA achieves the best average accuracy on all three backbones, outperforming standard LoRA by up to $4.3\%$ under the same parameter and learning-rate budget. It also consistently surpasses the optimization-related baselines rsLoRA and LoRA+ across models of different families and sizes, suggesting that the improvement comes from redistributing a fixed learning-rate budget across rank-one components rather than tuning a single global learning rate.

\paragraph{Natural Language Generation.}
We further finetune LLaMA2-7B on the 100K subset of WizardLM~\citep{xu2025wizardlmempoweringlargepretrained}, MetaMathQA~\citep{metamath} and Code-Feedback~\citep{codefeedback}, and evaluate AnLR-LoRA on dialogue with MT-Bench~\citep{zheng2023judgingllmasajudgemtbenchchatbot}, math reasoning with GSM8K~\citep{gsm8k} and code generation with HumanEval~\citep{humaneval}, respectively. As shown in Tab.~\ref{tab:nlg_results}, AnLR-LoRA improves over both LoRA and LoRA+ on all three tasks, with the largest gain on HumanEval ($+3.66\%$ over LoRA). The gains suggest that adaptive anisotropic learning-rate also benefits open-ended generation.

\paragraph{Visual Instruction Tuning.}
We extend AnLR-LoRA to the multimodal setting by finetuning LLaVA-1.5-7B on the instruction tuning dataset~\citep{liu2024improvedbaselinesvisualinstruction} and evaluating on seven vision-language benchmarks. As shown in Tab.~\ref{tab:visual_instruction_tuning}, AnLR-LoRA achieves the best average score, matching or surpassing LoRA on most benchmarks and obtaining the largest gain on VizWiz ($+3.07\%$ over LoRA). The improvement remains consistent at a much larger rank ($r=128$) and in the multimodal setting, showing that adaptive LR remains effective beyond standard language tasks.

\subsection{Ablation Studies and Analyses}

\noindent\textbf{AnLR-LoRA encourages broader use of rank budget. } We analyze how AnLR-LoRA changes the training dynamics of LoRA fine-tuning. Fig.~\ref{fig:efflr_heatmap} shows that AnLR-LoRA produces per-rank-one multipliers that vary across training steps, rather than assigning a fixed importance ordering. A rank-one component can be boosted or dampened at different stages of training, indicating that the method adapts to the current optimization state instead of imposing a static rank prior. This dynamic redistribution encourages different rank-one components to participate throughout training, rather than allowing a small subset of rank-one components to consistently dominate the update (Fig.~\ref{fig:per_rank_lora_consistency}).

\begin{figure}
    \centering
    \includegraphics[width=0.75\linewidth]{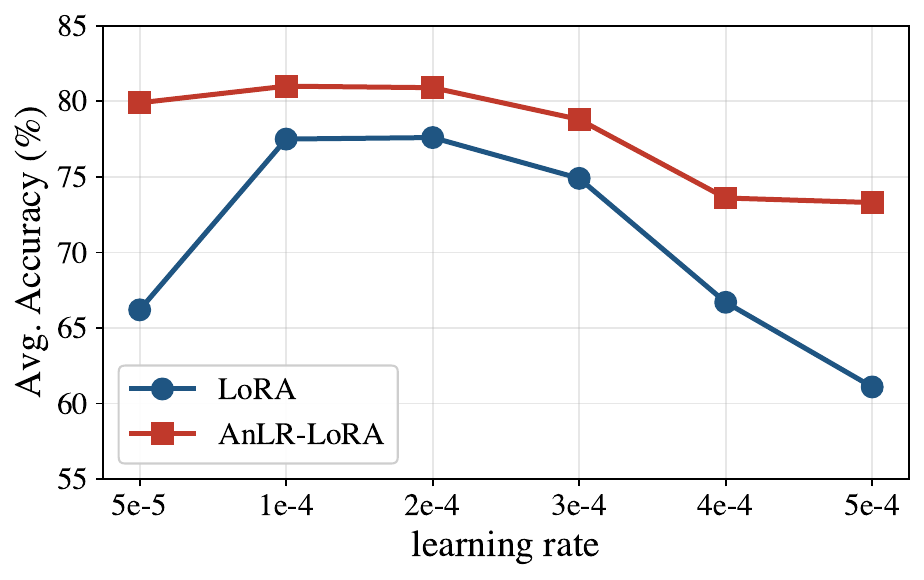}
    \caption{Average commonsense reasoning accuracy on LLaMA2-7B across varying LR. AnLR-LoRA exceeds LoRA at every LR, with the gap widening as LR moves away from the optimal range.}
    \label{fig:fig_lr_sweep}
\end{figure}

This anisotropic LR schedule also changes the structure of the learned update. Compared to LoRA, which concentrates update energy in the leading singular directions, AnLR-LoRA produces a flatter cumulative energy curve (Fig.~\ref{fig:flatten_cum}) and redistributes spectral energy toward lower singular directions (Fig.~\ref{fig:spectral_energy_shift}). This indicates that AnLR-LoRA reduces spectral concentration in the learned update and encourages broader use of the available rank capacity.

\noindent\textbf{Analysis on varying LR. } 
Fig.~\ref{fig:fig_lr_sweep}
examines whether the gains of AnLR-LoRA can be explained by simply tuning the global learning rate. We compare LoRA and AnLR-LoRA on LLaMA2-7B across a wide range of learning rates. LoRA is highly sensitive to the choice of LR and its average accuracy varies by 16.5 points and peaks at 77.6. Increasing or decreasing the global learning rate can improve some tasks but degrade others, 
and performance deteriorates when the learning rate moves away from a narrow favorable range (Tab.~\ref{tab:lr_sweep}). 
This indicates that simply changing the overall learning rate is not sufficient to resolve the uneven rank-one learning dynamics observed in Sec.~\ref{sec:motivation}. 

AnLR-LoRA consistently outperforms LoRA across varying learning rates. The gain over LoRA widens as the learning rate moves away from the optimal range, which shows AnLR-LoRA degrades much more gracefully under poorly tuned LR. The robustness across LR settings highlights the advantage of adaptive anisotropic LR as it provides a finer-grained optimization mechanism with gains that no single global LR can achieve.

\begin{table}[htp]
  \centering
  \resizebox{0.9\linewidth}{!}{%
  \begin{tabular}{lccc}
    \toprule
    \textbf{Method} &  \textbf{GSM8K} & \textbf{HumanEval} \\
    \midrule
    PiSSA                          & 58.98          & 24.39          \\
    AnLR-PiSSA                      & \textbf{59.36} & \textbf{25.00} \\
    \midrule
    MiLoRA                         & 60.12          & 25.61          \\
    AnLR-MiLoRA                      & \textbf{61.41} & \textbf{26.22} \\
    \midrule
    LoRA-Dash                     & 60.73 & 27.44          \\
    AnLR-LoRA-Dash                   & \textbf{61.11 }        & \textbf{28.05} \\
    \bottomrule
  \end{tabular}
  }
  \caption{Results of applying anisotropic learning rate to LoRA variants.}
  \label{tab:lora_variants}
\end{table}

\begin{table}[htp]
  \centering
  \resizebox{0.95\linewidth}{!}{%
  \begin{tabular}{lcc}
    \toprule
    \textbf{LR scaling}   & \textbf{GSM8K} & \textbf{HumanEval} \\
    \midrule
    None (uniform)                 & 59.59          & 24.39          \\
    Velocity-only    &   61.18  &  26.22 \\
    SNR-only   & 60.12  & 25.00 \\
    Velocity + SNR (ours)          & \textbf{61.87}  & \textbf{28.05}    \\
    \bottomrule
  \end{tabular}
  }
  \caption{Ablation of AnLR-LoRA's per-rank-one scaling signal.}
  \label{tab:sched_ablation}
\end{table}

\noindent\textbf{Application to other LoRA variants. } 
We extend our evaluation of anisotropic LR to other LoRA variants in Tab.~\ref{tab:lora_variants}. We apply the same schedule to PiSSA~\cite{pissa}, MiLoRA~\cite{milora} and LoRA-Dash~\cite{lora-dash}, covering variants that modify LoRA initialization or update behavior. Anisotropic LR consistently improves all three methods on math reasoning and code generation tasks, suggesting that our approach is largely orthogonal to existing LoRA-based methods and can provide complementary gains.

\paragraph{Ablation on training-time state signals.}
Tab.~\ref{tab:sched_ablation} ablates the signals used for adaptive LR scaling. Velocity-only scaling consistently improves over the uniform baseline, confirming that rank-one update activity is informative for learning-rate redistribution. SNR scaling allocates more learning rate to components with reliable update directions. Combining the two yields the best results on both GSM8K and HumanEval, suggesting that the two signals offer complementary guidance. Appendix~\ref{appendix:mechanism} further verifies this mechanism, showing that the velocity term reduces the within-module velocity imbalance and the SNR term prevents components with noisy update directions from being over-amplified. The full AnLR-LoRA schedule therefore benefits from combining activity-aware redistribution with confidence-aware refinement.

\section{Conclusion}
This paper revisits the uniform-LR convention in LoRA fine-tuning and shows that it overlooks substantial heterogeneity in per-rank-one training dynamics within a single LoRA adapter. We propose an adaptive anisotropic LR model that assigns each rank-one component its own LR based on training-time signals, and instantiate it as AnLR-LoRA using function-space velocity and Adam SNR to encourage learning on low-velocity components with reliable update directions. Experiments across language and multimodal settings demonstrate that AnLR-LoRA improves LoRA adaptation and encourages broader use of rank capacity, highlighting anisotropic LR redistribution as a simple and effective optimization dimension.

\section*{Limitations}
AnLR-LoRA is one instantiation within a broader adaptive anisotropic LR framework, leaving alternative signal choices, fusion strategies and more sophisticated uncertainty measures for update direction as directions for future work. Our rank-utilization analysis verifies that the full fine-tuning update spreads its energy across the rank budget, and characterizing how broadly this holds across tasks would further delineate when anisotropic LR is most beneficial. The extension to broader scenarios, such as multilingual, long-context and larger-scale training, also remains to be explored.

\section*{Acknowledgements}
This work was partially supported by the ARC DECRA Fellowship (DE230101591) and the ARC Discovery Project Grant (DP260103379) awarded to D. Gong, as well as PhD scholarship support from UNSW and CSIRO Data61.

\bibliography{main}

\clearpage
\appendix

\section{More Details about AnLR-LoRA}

\subsection{Pre- and Post-Step Scaling under AdamW}
\label{appendix:proofs}

This section states and proves Propositions~\ref{prop:pre-step} and~\ref{prop:post-step}, which together delimit the design space for per-rank-one effective learning rates under AdamW. Proposition~\ref{prop:pre-step} shows that pre-step gradient scaling is ineffective for this purpose. Proposition~\ref{prop:post-step} shows that post-step delta scaling is equivalent to direct outer-rate substitution.

\paragraph{Setup.}
AdamW's update direction restricted to the $k$-th rank-one component is
\begin{equation}
u_k^{(t)} = \frac{\hat m_k^{(t)}}{\sqrt{\hat v_k^{(t)}} + \varepsilon} + \lambda \theta_k^{(t)},
\label{eq:adamw-direction_appendix}
\end{equation}
and the post-step delta-scaled update is
\begin{equation}
\theta_k^{(t+1)} = \theta_k^{(t)} + s_k^{(t)} \bigl(\hat\theta_k^{(t+1)} - \theta_k^{(t)}\bigr),
\label{eq:post-step-update_appendix}
\end{equation}
with $\hat\theta_k^{(t+1)} = \theta_k^{(t)} - \eta_t u_k^{(t)}$.

\begin{proposition}[Pre-step gradient scaling]
\label{prop:pre-step}
Let $\{\theta^{(t)}\}$ be the AdamW trajectory under gradient sequence $\{g^{(t)}\}$ with learning rate $\eta_t$, moment decays $\beta_1, \beta_2$, decoupled weight decay $\lambda$ and stability constant $\varepsilon$. Replacing the gradient by $c \cdot g^{(t)}$ at every step, for any fixed $c > 0$, produces a trajectory equal to the unscaled one when $\varepsilon = 0$, and differs by an $O(\varepsilon)$ perturbation otherwise.
\end{proposition}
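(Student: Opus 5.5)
The proof proceeds by induction on $t$, showing that the Adam moment estimates transform homogeneously under gradient scaling. Fix $c>0$ and write $m_c^{(t)}, v_c^{(t)}$ for the moments computed from the scaled gradients $c\,g^{(t)}$, with the same zero initialization $m^{(0)}=v^{(0)}=0$. The first step is the claim that, for all $t$,
\begin{equation*}
m_c^{(t)} = c\, m^{(t)}, \qquad v_c^{(t)} = c^2\, v^{(t)}.
\end{equation*}
The base case is trivial. The inductive step follows by linearity of the EMA recursions: $m_c^{(t)} = \beta_1 c\, m^{(t-1)} + (1-\beta_1) c\, g^{(t)}$ and $v_c^{(t)} = \beta_2 c^2 v^{(t-1)} + (1-\beta_2) c^2\, g^{(t)}\odot g^{(t)}$. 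Bias correction divides by scalars that do not depend on $c$, so the same relations hold for $\hat m, \hat v$. Taking element-wise square roots gives $\sqrt{\hat v_c^{(t)}} = c\sqrt{\hat v^{(t)}}$, since $c>0$.

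For the case $\varepsilon=0$, substitute into Eq.~\ref{eq:adamw-direction}. The preconditioned term becomes $c\hat m^{(t)}/(c\sqrt{\hat v^{(t)}}) = \hat m^{(t)}/\sqrt{\hat v^{(t)}}$, so it is unchanged. The weight-decay term $\lambda\theta^{(t)}$ does not involve the gradient.

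One subtlety needs care. In a real training run the gradient $g^{(t)}$ depends on $\theta^{(t)}$, so I would formulate the induction jointly. Under the hypothesis that both trajectories agree at step $t$, they see the same base gradient $g^{(t)}$, so the moments scale as above and $\theta_c^{(t+1)} = \theta^{(t+1)}$. This closes the induction on the triple $(\theta, m, v)$. Where $\hat v_i=0$, the convention $0/0$ must be handled the same way in both runs; otherwise that coordinate receives no update in either.

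For $\varepsilon>0$, I would compare the two per-step directions coordinatewise, using the identity
\begin{equation*}
\frac{c\hat m_i}{c\sqrt{\hat v_i}+\varepsilon} - \frac{\hat m_i}{\sqrt{\hat v_i}+\varepsilon}
= \frac{\hat m_i\,\varepsilon\,(c-1)}{\bigl(c\sqrt{\hat v_i}+\varepsilon\bigr)\bigl(\sqrt{\hat v_i}+\varepsilon\bigr)}.
\end{equation*}
By Cauchy--Schwarz on the EMAs, with bias corrections handled, $|\hat m_i|\le C_\beta\sqrt{\hat v_i}$. Hence this difference is bounded by $C\,\varepsilon\,|c-1|/(c\sqrt{\hat v_i}+\varepsilon)$, which is $O(\varepsilon)$ whenever $\hat v_i$ is bounded away from zero.

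The main obstacle is propagating this per-step $O(\varepsilon)$ discrepancy along the trajectory, because the gradients are evaluated at slightly different parameters. I would first prove the statement in the setting the proposition literally states, with a fixed gradient sequence $\{g^{(t)}\}$. There the moments are exactly $c$-homogeneous, and only the update differs, by $\eta_t\,O(\varepsilon)$ per step. Summing over a finite horizon then gives an $O(\varepsilon)$ trajectory difference, with constants depending on $T$, $\sum_t\eta_t$ and $\lambda$; the weight-decay contraction $(1-\eta_t\lambda)$ only helps. If a state-dependent version is wanted, I would add a Lipschitz assumption on the gradient map and use a discrete Gr\"onwall argument over the finite horizon.
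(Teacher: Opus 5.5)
Your proof is correct and follows the paper's route: induction on the EMA recursions to get $\hat m\mapsto c\,\hat m$ and $\hat v\mapsto c^2\hat v$, then exact cancellation of $c$ (plus induction on $\theta$) when $\varepsilon=0$, and the same explicit coordinatewise difference when $\varepsilon>0$; your expression is the paper's $\hat m\,\varepsilon(1-1/c)/((\sigma_t+\varepsilon/c)(\sigma_t+\varepsilon))$ with numerator and denominator multiplied by $c$. The paper stops at that per-step difference, implicitly at equal parameters and under $\sigma_t\gg\varepsilon$. Your Cauchy--Schwarz bound $|\hat m_i|\le C_\beta\sqrt{\hat v_i}$ (uniform in $t$ when $\beta_1^2<\beta_2$, horizon-dependent otherwise) and your finite-horizon accumulation through the $(1-\eta_t\lambda)$ contraction are what turn that per-step estimate into a trajectory-level $O(\varepsilon)$ statement.
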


\begin{proof}
Consider two AdamW trajectories initialized identically at $\theta^{(0)}$, with $m^{(0)} = v^{(0)} = 0$ and shared $(\eta_t, \lambda, \beta_1, \beta_2, \varepsilon)$:
\begin{itemize}
\item Run~A driven by $\{g^{(t)}\}_{t \geq 1}$.
\item Run~B driven by $\{c\, g^{(t)}\}_{t \geq 1}$, fixed $c > 0$.
\end{itemize}

\paragraph{Step 1: moment scaling.}
We show by induction that
\begin{equation}
m_B^{(t)} = c\, m_A^{(t)}, \quad v_B^{(t)} = c^2 v_A^{(t)}.
\label{eq:moment-scaling}
\end{equation}
The base case ($t = 0$) is trivial. For the inductive step,
\begin{align*}
m_B^{(t)}
&= \beta_1 m_B^{(t-1)} + (1 - \beta_1) g_B^{(t)} \\
&= c \bigl[\beta_1 m_A^{(t-1)} + (1 - \beta_1) g^{(t)}\bigr] \\
&= c\, m_A^{(t)},
\end{align*}
and similarly $v_B^{(t)} = c^2 v_A^{(t)}$. Bias correction applies symmetrically, so $\hat m_B^{(t)} = c\, \hat m_A^{(t)}$ and $\hat v_B^{(t)} = c^2 \hat v_A^{(t)}$.

\paragraph{Step 2: update direction.}
Let $\sigma_t := \sqrt{\hat v_A^{(t)}}$ for brevity. The AdamW direction in Run~B is
\begin{align*}
u_B^{(t)}
&= \frac{\hat m_B^{(t)}}{\sqrt{\hat v_B^{(t)}} + \varepsilon} + \lambda \theta_B^{(t)} \\
&= \frac{c\, \hat m_A^{(t)}}{c\, \sigma_t + \varepsilon} + \lambda \theta_B^{(t)} \\
&= \frac{\hat m_A^{(t)}}{\sigma_t + \varepsilon/c} + \lambda \theta_B^{(t)}.
\end{align*}

\noindent\textit{Case $\varepsilon = 0$.}
The factor $c$ cancels, giving $u_B^{(t)} = u_A^{(t)}$ whenever $\theta_B^{(t)} = \theta_A^{(t)}$. The parameter update preserves the equality, so induction yields $\theta_B^{(t)} = \theta_A^{(t)}$ for all $t \geq 0$.

\noindent\textit{Case $\varepsilon > 0$.}
$u_B^{(t)}$ differs from $u_A^{(t)}$ only by the substitution $\varepsilon \mapsto \varepsilon/c$:
\begin{equation*}
u_B^{(t)} - u_A^{(t)}
= \frac{\hat m_A^{(t)}\, \varepsilon\, (1 - 1/c)}{(\sigma_t + \varepsilon/c)(\sigma_t + \varepsilon)},
\end{equation*}
which is $O(\varepsilon)$ and negligible whenever $\sigma_t \gg \varepsilon$, the operating regime of AdamW.

\paragraph{Conclusion.}
Pre-step gradient scaling by $c$ yields a trajectory equal to the unscaled one up to $O(\varepsilon)$. A constant factor $s_k$ applied to $g_k^{(t)}$ is therefore cancelled by Adam's preconditioner. A time-varying $s_k^{(t)}$ is not exactly cancelled, but it acts by perturbing the moment estimates rather than by rescaling the step, so it does not install a per-component effective learning rate either.
\end{proof}

\begin{proposition}[Post-step delta scaling]
\label{prop:post-step}
Let $\theta^{(t+1)}_k$ be defined by Eq.~\ref{eq:post-step-update_appendix}, where $\hat\theta^{(t+1)}_k$ is one AdamW step on $\theta^{(t)}_k$ with learning rate $\eta_t$ and decoupled weight decay $\lambda$. The resulting trajectory $\{\theta^{(t)}_k\}$ is identical to the trajectory obtained by running AdamW on the same gradient sequence with outer learning rate $s_k^{(t)} \eta_t$ and unchanged decoupled weight decay $\lambda$.
\end{proposition}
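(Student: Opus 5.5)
The plan is a one-step algebraic identity followed by an induction over $t$ showing that the two trajectories and their optimizer states coincide.

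We compare two runs started from the same $\theta_k^{(0)}$ with $m^{(0)}=v^{(0)}=0$:
\begin{itemize}
\item Run~P uses the post-step delta scaling of Eq.~\ref{eq:post-step-update_appendix}.
\item Run~Q uses plain AdamW, but with outer rate $s_k^{(t)}\eta_t$ at step $t$ and the same decoupled weight decay $\lambda$. Its update is $\theta_k^{(t+1)}=\theta_k^{(t)}-s_k^{(t)}\eta_t u_k^{(t)}$, with $u_k^{(t)}$ as in Eq.~\ref{eq:adamw-direction_appendix}.
\end{itemize}
Both runs receive the same gradient sequence $\{g^{(t)}\}$. We must also fix how "weight decay $\lambda$" enters Run~Q. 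In the convention of Eq.~\ref{eq:adamw-direction_appendix}, the decay term sits inside $u_k^{(t)}$, so it is multiplied by the outer rate. This is the standard PyTorch convention, where the parameter shrinks by the factor $1-\eta\lambda$.

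\textbf{Step 1 (one-step identity).} Substitute $\hat\theta_k^{(t+1)}-\theta_k^{(t)}=-\eta_t u_k^{(t)}$ into Eq.~\ref{eq:post-step-update_appendix}. This gives
\[
\theta_k^{(t+1)}=\theta_k^{(t)}-s_k^{(t)}\eta_t u_k^{(t)},
\]
which is exactly the Run~Q step, provided both runs compute the same $u_k^{(t)}$ at that step.

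\textbf{Step 2 (induction).} The hypothesis at time $t$ is that $\theta^{(t)}$, $m^{(t)}$ and $v^{(t)}$ agree in the two runs.
\begin{itemize}
\item The moments are updated only from the gradient, and the gradient is shared. The problem's premise is "the same gradient sequence"; in practice this is automatic because the gradient is evaluated at the same $\theta^{(t)}$. Hence $m^{(t+1)}$, $v^{(t+1)}$ and their bias corrections agree. The bias corrections depend only on $t$, not on the learning rate.
\item With equal parameters and equal moments, $u_k^{(t)}$ agrees, including the weight-decay term $\lambda\theta_k^{(t)}$.
\item Step 1 then gives equal $\theta_k^{(t+1)}$.
\end{itemize}
The scaling never touches $m$ or $v$, because AdamW's state is updated before the parameter write and independently of the learning rate. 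This is exactly why "leaves AdamW's state unchanged" holds.

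\textbf{Step 3 (the delicate point: the decay convention).} The only real obstacle is making precise what "unchanged decoupled weight decay $\lambda$" means.
\begin{itemize}
\item If Run~Q decays the parameter by $\eta_t\lambda$ independently of the per-component rate, the identity fails by the term $(s_k^{(t)}-1)\eta_t\lambda\theta_k^{(t)}$.
\item I would therefore state explicitly that the equivalence holds under the convention of Eq.~\ref{eq:adamw-direction_appendix}: the coefficient $\lambda$ is unchanged, and the decay is scaled by the effective rate $s_k^{(t)}\eta_t$, as with any per-group learning rate in standard AdamW.
\item I would add a remark that, because $s_k^{(t)}$ may depend on quantities computed at step $t$, Run~Q is a run with a data-dependent rate schedule. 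Since that schedule is identical across the two runs, the induction is unaffected.
\end{itemize}
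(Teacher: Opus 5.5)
Your proposal is correct and follows essentially the same route as the paper. The paper's proof substitutes $\hat\theta_k^{(t+1)}-\theta_k^{(t)}=-\eta_t u_k^{(t)}$ into the post-step rule and notes that the moment recurrences do not depend on the outer rate. It then reads off a shrinkage of $s_k^{(t)}\eta_t\lambda\,\theta_k^{(t)}$ with $\lambda$ itself unchanged, which is the weight-decay convention you fix in Step~3; your explicit induction on $(\theta^{(t)},m^{(t)},v^{(t)})$ only spells out what the paper leaves implicit in ``at every step produces the same trajectory.''
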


\begin{proof}
Substituting the AdamW step into Eq.~\ref{eq:post-step-update_appendix}:
\begin{align*}
\theta_k^{(t+1)}
&= \theta_k^{(t)} + s_k^{(t)} \bigl(\hat\theta_k^{(t+1)} - \theta_k^{(t)}\bigr) \\
&= \theta_k^{(t)} + s_k^{(t)} \bigl(-\eta_t u_k^{(t)}\bigr) \\
&= \theta_k^{(t)} - \bigl(s_k^{(t)} \eta_t\bigr)\, u_k^{(t)}.
\end{align*}
This is the AdamW update with outer learning rate $\eta_t$ replaced by $s_k^{(t)} \eta_t$, leaving $\hat m_k^{(t)}$, $\hat v_k^{(t)}$ and $\lambda$ unchanged. Expanding via Eq.~\ref{eq:adamw-direction_appendix}:
\begin{align*}
\theta_k^{(t+1)}
&= \theta_k^{(t)} - \bigl(s_k^{(t)} \eta_t\bigr) \frac{\hat m_k^{(t)}}{\sqrt{\hat v_k^{(t)}} + \varepsilon} \\
&\quad - \bigl(s_k^{(t)} \eta_t \lambda\bigr)\, \theta_k^{(t)},
\end{align*}
which matches a standard AdamW update with outer rate $\eta'_t = s_k^{(t)} \eta_t$ and shrinkage coefficient $\eta'_t \lambda$ on $\theta_k^{(t)}$. The moment recurrences for $m_k$ and $v_k$ depend only on the gradient sequence and $(\beta_1, \beta_2)$, not on the outer rate, so they are unchanged by the substitution. Therefore, post-step delta scaling at every step produces the same trajectory as running AdamW on the same gradient sequence with outer learning rate $s_k^{(t)} \eta_t$ and unchanged decoupled weight decay $\lambda$.
\end{proof}

\paragraph{Remarks.}
Together, the two propositions delimit the admissible implementations of per-rank-one effective learning rates under AdamW: pre-step gradient scaling is ineffective for this purpose (Proposition~\ref{prop:pre-step}), while post-step delta scaling and direct per-component outer-rate substitution are equivalent (Proposition~\ref{prop:post-step}). \method{} uses the post-step delta form because it is a drop-in modification of the AdamW step output that requires no change to the optimizer's internal state or update rule.

\subsection{Training Procedure of AnLR-LoRA}
\label{appendix:pseudocode}

Alg.~\ref{alg:anlr_lora} summarizes the training procedure of AnLR-LoRA. At each step, we compute per-rank-one velocity from
the loss gradients on $A$ and $B$ and per-rank-one Adam SNR from Adam's bias-corrected moment estimates. After warmup, the
two normalized signals are fused into per-rank-one multipliers. A standard AdamW step then produces provisional LoRA
parameters, and the multipliers are applied through post-step delta scaling on the resulting updates. This implementation
leaves the AdamW optimizer state unchanged and introduces no additional forward or backward passes.

\begin{algorithm*}[t]
\caption{AnLR-LoRA}
\label{alg:anlr_lora}
\begin{algorithmic}[1]
\Require LoRA parameters $\{A,B\}$, AdamW optimizer, LR schedule $\eta_t$,
warmup $T_{\mathrm{warm}}$, EMA coefficient $\rho$, fusion weights $\alpha,\beta$,
clamp factor $\kappa$
\State Initialize velocity EMA $\bar{\nu}_k \leftarrow 0$ for each rank-one component
\For{each training step $t$}
    \State Save a snapshot of current LoRA parameters $\theta^{(t)}=\{A^{(t)},B^{(t)}\}$
    \State Take one standard AdamW step to obtain provisional parameters $\hat{\theta}^{(t+1)}$
    \For{each LoRA module}
        \State Compute per-rank velocity $\nu_k^{(t)}$ using Eq.~\ref{eq:velocity-discrete}
        \State Update EMA velocity:
        $\bar{\nu}_k^{(t)} \leftarrow \rho\bar{\nu}_k^{(t-1)} + (1-\rho)\nu_k^{(t)}$
        \State Compute per-rank Adam SNR $\mathrm{snr}_k^{(t)}$ using Eq.~\ref{eq:snr}
        \State Normalize $\bar{\nu}_k^{(t)}$ and $\mathrm{snr}_k^{(t)}$ within the module
        \If{$t > T_{\mathrm{warm}}$}
            \State Compute $\log s_k^{(t)}
            \leftarrow
            \alpha\log(1/\tilde{\nu}_k^{(t)})
            + \beta\log\widetilde{\mathrm{snr}}_k^{(t)}$
            \State Clamp $\log s_k^{(t)}$ to $[-\log\kappa,\log\kappa]$
            \State Set $s_k^{(t)} \leftarrow \exp(\log s_k^{(t)})$ and mean-normalize across ranks
        \EndIf
        \State Apply post-step delta scaling:
        $\theta_k^{(t+1)} \leftarrow
        \theta_k^{(t)} +
        s_k^{(t)}(\hat{\theta}_k^{(t+1)}-\theta_k^{(t)})$
    \EndFor
\EndFor
\end{algorithmic}
\end{algorithm*}

\section{Experimental Setups}
\subsection{Datasets}
\label{app:datasets}

We describe the datasets used in our experiments on commonsense reasoning, generative tasks and visual instruction tuning.

\textbf{Commonsense Reasoning.}
\begin{enumerate}
    \item \textbf{Commonsense170K}~\citep{hu2023llm} is a large-scale instruction tuning dataset containing approximately 170K commonsense reasoning examples collected from multiple benchmarks.

    \item \textbf{BoolQ}~\citep{clark2019boolq} (3.2K test set) is a yes/no question answering benchmark built from naturally occurring Google search queries paired with Wikipedia passages.

    \item \textbf{PIQA}~\citep{bisk2020piqa} (3K test set) evaluates physical commonsense reasoning by asking models to select the more plausible solution to an everyday task.

    \item \textbf{SIQA}~\citep{sap2019socialiqa} (1.95K test set) measures commonsense reasoning about social situations and human interactions.

    \item \textbf{HellaSwag}~\citep{zellers2019hellaswag} (10K test set) is a commonsense completion benchmark requiring models to select the most plausible continuation from multiple candidates.

    \item \textbf{WinoGrande}~\citep{sakaguchi2021winogrande} (1.7K test set)  evaluates commonsense reasoning through pronoun resolution problems with ambiguous references.

    \item \textbf{ARC-e} (2.3K test set) and \textbf{ARC-c} (1.1K test set)~\citep{clark2018think} are science question answering benchmarks containing elementary-level and challenge-level multiple-choice questions, respectively.

    \item \textbf{OBQA}~\citep{mihaylov2018can} (500 test set) is an open-book question answering benchmark requiring external commonsense and science knowledge.
\end{enumerate}

\textbf{Natural Language Generation. }
\begin{enumerate}
    \item \textbf{MetaMathQA}~\citep{metamath} is a large-scale mathematical instruction tuning dataset. Following prior work, we use its 100K subset for training in math reasoning experiments.

    \item \textbf{GSM8K}~\citep{gsm8k} (1.3K test set) is a benchmark of grade-school mathematical reasoning problems.

    \item \textbf{Code-Feedback}~\citep{codefeedback} contains code solutions paired with natural language feedback. We use its 100K subset for code generation training.

    \item \textbf{HumanEval}~\citep{humaneval} (164 test set) evaluates code generation through function synthesis problems with unit-test-based execution evaluation.
    
    \item \textbf{WizardLM}~\citep{xu2025wizardlmempoweringlargepretrained} is an instruction tuning dataset generated with Evol-Instruct, which progressively rewrites seed instructions into more complex and diverse training examples. 
    
    \item \textbf{MT-Bench}~\citep{zheng2023judgingllmasajudgemtbenchchatbot} (80 test set) is a multi-turn dialogue benchmark evaluated using LLM-based judgment.

\end{enumerate}

\textbf{Vision-Language Benchmarks.}
\begin{enumerate}
    \item 
    \textbf{LLaVA-1.5 Instruction Tuning Dataset}~\citep{liu2024improvedbaselinesvisualinstruction} contains approximately 665K multimodal instruction-following examples used for visual instruction tuning.
    
    \item \textbf{VQA$^{\text{v2}}$}~\citep{goyal2017makingvvqamatter} (10.7K test set) is a visual question answering benchmark containing open-ended questions about natural images.

    \item \textbf{GQA}~\citep{hudson2019gqanewdatasetrealworld} (12.5K test set) evaluates compositional reasoning and relational understanding in visual question answering.

    \item \textbf{VizWiz}~\citep{gurari2018vizwizgrandchallengeanswering} (8K test set) is a visual question answering benchmark built from images captured by blind users.

    \item \textbf{SQA}~\citep{lu2022learn} (4.2K test set) is a science-oriented visual reasoning benchmark based on diagram understanding.

    \item \textbf{VQA$^{\text{T}}$}~\citep{Singh2019Towards} (5k test set) evaluates visual question answering with text-rich visual inputs.

    \item \textbf{POPE}~\citep{li2023evaluating} (9K test set) evaluates object hallucination in multimodal large language models.

    \item \textbf{MMBench}~\citep{liu2024mmbench} (8.2K test set) is a comprehensive multimodal benchmark covering perception, reasoning and instruction following.
\end{enumerate}

\subsection{Implementation Details}
\label{app:impl_details}

All experiments were conducted on NVIDIA H100 and H200 GPUs, with each finetuning run requiring at most several GPU-hours. We implement all methods using the PEFT library\footnote{https://github.com/huggingface/peft}. We report the hyperparameter settings for commonsense reasoning in Tab.~\ref{tab:hyper_llama}, covering finetuning on LLaMA2-7B\footnote{https://huggingface.co/meta-llama/Llama-2-7b-hf}, LLaMA3-8B\footnote{https://huggingface.co/meta-llama/Meta-Llama-3-8B} and Qwen2.5-7B\footnote{https://huggingface.co/Qwen/Qwen2.5-7B-Instruct-1M}. The shared hyperparameter configuration for dialogue, math reasoning and code generation experiments with LLaMA2-7B is reported in Tab.~\ref{tab:hyper_nlg}. For visual instruction tuning with LLaVA-1.5-7B, we report the corresponding hyperparameter configuration in Tab.~\ref{tab:hyper_visual}. For all settings, we insert LoRA modules into the query, key and value projections of attention layers and the up- and down-projection layers of FFN modules. The log-additive fusion hyperparameters $\alpha$, $\beta$ and $\kappa$ are set to $2.0$, $1.0$ and $3.0$, respectively, with robustness analyses provided in the Appendix. We perform multiple runs with independent seeds and report the average results. 

\paragraph{Baseline tuning protocol.} All baselines share the training configuration of our method as reported in Tab.~\ref{tab:hyper_llama}, Tab.~\ref{tab:hyper_nlg} and Tab.~\ref{tab:hyper_visual}, with method-specific hyperparameters set to their officially recommended values. A further learning-rate sweep on LLaMA2-7B commonsense reasoning (Tab.~\ref{tab:baseline_lr}) confirms that rsLoRA and LoRA+ attain their best accuracy at the learning rate used in our setup. AnLR-LoRA outperforms all baselines at every learning rate in the sweep, including each baseline's own best setting, and the reported gains therefore do not stem from favorable learning-rate choices.

\begin{table}[t]
\centering
\small
\begin{tabular}{lc}
\toprule
Hyperparameters & Value \\
\midrule
Rank $r$ & 32 \\
LoRA Alpha & 64 \\
Dropout & 0.05 \\
Optimizer & AdamW \\
LR  & 1e-4 \\
LR Scheduler & Linear \\
Batch size & 16 \\
Epochs & 3 \\
\bottomrule
\end{tabular}
\caption{Hyperparameter configuration on commonsense reasoning.}
\label{tab:hyper_llama}
\end{table}

\begin{table}[t]
\centering
\small
\begin{tabular}{lc}
\toprule
Hyperparameters & Value \\
\midrule
Rank $r$ & 8 \\
LoRA Alpha & 16 \\
Dropout & 0.05 \\
Optimizer & AdamW \\
LR & 3e-4 \\
LR Scheduler & Linear \\
Batch size & 16 \\
Epochs & 3 \\
\bottomrule
\end{tabular}
\caption{Hyperparameter configuration on dialogue, math reasoning and code generation.}
\label{tab:hyper_nlg}
\end{table}

\begin{table}[t]
\centering
\small
\begin{tabular}{lc}
\toprule
Hyperparameters & LLaVA-1.5-7B \\
\midrule
Rank $r$ & 128 \\
LoRA Alpha & 256 \\
Dropout & 0.05 \\
Optimizer & AdamW \\
LR & 2e-4 \\
LR Scheduler & Linear \\
Batch size & 16 \\
Epochs & 1 \\
\bottomrule
\end{tabular}
\caption{Hyperparameter configuration on visual instruction tuning.}
\label{tab:hyper_visual}
\end{table}

\begin{table}[t]
\centering
\resizebox{0.8\linewidth}{!}{%
\begin{tabular}{lcccc}
\toprule
Method & 5e-5 & 1e-4 & 2e-4 & 3e-4 \\
\midrule
LoRA      & 66.2 & 77.5 & 77.6 & 74.9 \\
rsLoRA    & 49.2 & 79.0 & 61.5 & 63.4 \\
LoRA+     & 66.0 & 79.4 & 77.0 & 74.9 \\
AnLR-LoRA & 79.9 & 81.0 & 80.9 & 78.8 \\
\bottomrule
\end{tabular}%
}
\caption{Average commonsense reasoning accuracy (\%) across varying learning rates on LLaMA2-7B.}
\label{tab:baseline_lr}
\end{table}

\section{Licenses and Usage}
\label{appendix:license}

All datasets and pretrained models used in this work are publicly available and used in accordance with their respective licenses and terms of use, including LLaMA2, LLaMA3, Qwen2.5 and LLaVA-1.5-7B. Our use of these resources is limited to academic research and evaluation purposes.

\section{More Experiments and Analyses}
\subsection{Analysis of Spectral Concentration}
\label{appendix:spectrum}
\paragraph{The full fine-tuning update has a nearly flat spectrum within the rank budget.}
We take full fine-tuning of LLaMA2-7B on Commonsense170K as a reference for the update that a rank-$32$ adapter approximates. 
We measure the effective rank of an update as the exponential of the Shannon entropy of its singular-energy distribution, which equals $k$ when the energy is spread uniformly over $k$ directions~\citep{roy2007effective}.
The best rank-$32$ approximation of the full fine-tuning update has an effective rank of $27.2$ out of $32$, whereas uniform-LR LoRA at rank $32$ learns an update with effective rank $12.7$. This gap bounds how much of the full fine-tuning update a LoRA update can capture, since by the Eckart–Young theorem an update that spreads its energy over $k$ directions captures at most the energy in the top-$k$ singular values. At the learned effective rank of 12.7, this ceiling is $59.4\%$ of what the rank-$32$ budget could capture, compared with $90.5\%$ at the effective rank of the full fine-tuning update. Moreover, the trailing singular directions (indices 25--32) carry $28\%$ as much energy of the full fine-tuning update per direction as the leading ones, yet receive only $2\%$ as much energy from LoRA. These directions therefore hold adaptation that uniform-LR training leaves unused.

\paragraph{Flattening the spectrum directly improves accuracy.}
To isolate the effect of a flatter spectrum on adaptation, we train standard LoRA with an additional loss penalty on the participation ratio of the singular values of $BA$ as a diagnostic intervention. The penalty is minimized when the spectrum is flat and involves no learning-rate intervention. For the LLaMA2-7B experiment on math reasoning, with a penalty coefficient of $10^{-3}$, the performance improves from 59.59 to 60.96 and the effective rank of the learned update reaches 7.22 out of 8. Increasing the coefficient to $10^{-2}$ flattens the spectrum further to an effective rank of 7.89 and raises accuracy to 61.11. Broader use of the rank budget is therefore itself beneficial.

\subsection{Mechanism Analysis of the Two Signals}
\label{appendix:mechanism}
This section examines the training dynamics underlying the two signals of AnLR-LoRA, measuring what velocity and Adam SNR capture in LoRA training and how each term of the multiplier shapes the learned update.

\paragraph{Low-velocity components carry under-exploited learning signal.} We analyze whether slow rank-one components carry meaningful learning signal that a larger learning rate could exploit, using a vanilla LoRA run on commonsense reasoning with LLaMA2-7B. We track the velocity, SNR and update-direction consistency of each rank-one component throughout training. Slow and fast rank-one components show comparable direction consistency, with mean cosine similarities of 0.22 and 0.24, indicating that low velocity does not correspond to an absence of a stable update direction. The difference in update velocity arises from the multiplicative structure of $BA$, under which the rate of change of a rank-one contribution scales with the norms of its factors. Within the slow half of the components, higher SNR is associated with higher direction consistency, and the SNR term distinguishes slow components with stable directions from those with unstable ones. The combined velocity–SNR score predicts the subsequent growth of each component's contribution with a Spearman correlation of 0.69. The components that the multiplier boosts are therefore those that continue to develop under uniform LR but at a slower rate, and redistributing learning rate toward them recovers the under-exploited learning signal.

\paragraph{Effect of each term.}
For the uniform, velocity-only, SNR-only and velocity+SNR configurations of Tab.~\ref{tab:sched_ablation} on LLaMA2-7B, we log per-rank velocity, per-rank SNR and the applied multipliers across all target modules. The velocity term reduces the within-module velocity imbalance, halving the coefficient of variation of per-rank velocity from 0.12 under uniform LR to 0.06. Without the velocity term, the imbalance remains at the level of uniform LR. The SNR term concentrates the redistributed learning rate on reliable components, lowering the share of the total boost received by the bottom-SNR quartile from 37.4\% under velocity-only scaling to 18.3\%, below the proportional share of 25\%. The two terms jointly reduce the fraction of spectrally concentrated modules from 28.1\% to 16.3\%, and neither term alone reaches the accuracy of their combination in Tab.~\ref{tab:sched_ablation}.

\subsection{Robustness to LR Choice}
\label{appendix:lr_sweep}
Tab.~\ref{tab:lr_sweep} provides the full per-task results for the varying LR experiment discussed in Sec.~\ref{sec:exp}. We evaluate six learning rates from $\eta = 5\text{e-}5$ to $5\text{e-}4$ on LLaMA-2-7B. LoRA is highly sensitive to the choice of learning rate. Its average accuracy peaks at $77.6\%$ near $\eta = 2\text{e-}4$ and drops substantially when the learning rate is smaller or larger, reaching $66.2\%$ at $\eta = 5\text{e-}5$ and $61.1\%$ at $\eta = 5\text{e-}4$. Across the evaluated range, LoRA varies by $16.5\%$, and no setting of the global learning rate lifts LoRA above the ceiling of $77.6\%$. AnLR-LoRA improves over LoRA at every learning rate, with gains ranging from $3.3\%$ at LoRA’s best setting to $13.7\%$ at the lower end of the sweep. AnLR-LoRA also outperforms LoRA at LoRA’s optimal learning rate, suggesting a genuine performance gain that cannot be explained by LR tuning alone. The total variation of AnLR-LoRA across evaluated LR range is only $7.7\%$ ($73.3\%$ to $81.0\%$), which is much more robust compared to LoRA. These results demonstrate adaptive anisotropic learning rate at the rank-one level improves both accuracy and robustness across global learning-rate choices.

\begin{table*}[htp]
\centering
\resizebox{\linewidth}{!}{%
\begin{tabular}{cl cccccccc c}
\toprule
\textbf{LR} & \textbf{Method} & \textbf{BoolQ} & \textbf{PIQA} & \textbf{SIQA} & \textbf{HellaSwag} & \textbf{WinoGrande} & \textbf{ARC-e} & \textbf{ARC-c} & \textbf{OBQA} & \textbf{Avg.} \\
\midrule
 \multirow{2}{*}{5e-5}
 & LoRA      & \textbf{71.8} & 82.4 & 65.7 & 51.9 & \textbf{82.2} & 66.9 & 57.9 & 50.6 & 66.2 \\
 & AnLR-LoRA & 70.6 & \textbf{83.3} & \textbf{79.5} & \textbf{89.1} & 82.0 & \textbf{84.7} & \textbf{69.9} & \textbf{80.4} & \textbf{79.9} \\
\midrule
\multirow{2}{*}{1e-4}
 & LoRA      & 72.2 & \textbf{84.6} & 77.4 & 90.1 & 82.6 & 81.2 & 67.4 & 64.8 & 77.5 \\
  & AnLR-LoRA & \textbf{72.5} & 84.1 & \textbf{78.9} & \textbf{91.4} & \textbf{82.8} & \textbf{85.0} & \textbf{71.3} & \textbf{82.4} & \textbf{81.0} \\
\midrule
\multirow{2}{*}{2e-4}
 & LoRA & 69.8 & 79.9 & 79.5 & 83.6 & 82.6 & 79.8 & 64.7 & 81.0 & 77.6 \\
 & AnLR-LoRA & \textbf{72.1} & \textbf{83.4} & \textbf{80.0} & \textbf{91.0} & \textbf{83.7} & \textbf{84.5} & \textbf{70.2} & \textbf{82.2} & \textbf{80.9} \\
\midrule
\multirow{2}{*}{3e-4}
 & LoRA      & 62.9 & 79.5 & 77.9 & 82.3 & 79.7 & 78.8 & 60.5 & 77.4 & 74.9 \\
 & AnLR-LoRA & \textbf{71.6} & \textbf{81.8} & \textbf{78.7} & \textbf{86.8} & \textbf{82.1} & \textbf{81.9} & \textbf{66.5} & \textbf{81.2} & \textbf{78.8} \\
\midrule
\multirow{2}{*}{4e-4}
 & LoRA      & 62.7 & 71.1 & \textbf{69.3} & 63.3 & 70.1 & 73.5 & 58.0 & 66.0 & 66.7 \\
 & AnLR-LoRA & \textbf{69.1} & \textbf{81.6} & 62.8 & \textbf{82.3} & \textbf{79.1} & \textbf{76.4} & \textbf{62.0} & \textbf{75.8} & \textbf{73.6} \\
\midrule
\multirow{2}{*}{5e-4}
 & LoRA      & 62.2 & 74.6 & 60.9 & 38.3 & 68.2 & 74.0 & 53.2 & 57.0 & 61.1 \\
 & AnLR-LoRA & \textbf{66.9} & \textbf{79.5} & \textbf{76.2} & \textbf{78.4} & \textbf{77.2} & \textbf{74.2} & \textbf{59.5} & \textbf{74.8} & \textbf{73.3} \\
\bottomrule
\end{tabular}%
}
\caption{Comparison of LoRA and AnLR-LoRA with varying learning rates on LLaMA2-7B.}
\label{tab:lr_sweep}
\end{table*}

\begin{table}[htp]
\centering
\resizebox{0.9\linewidth}{!}{%
\begin{tabular}{clcc}
\toprule
Rank & Method & \#Params (\%) & Acc. \\
\midrule
\multirow{2}{*}{$r=4$}
  & LoRA      & 0.10 & 58.30 \\
  & AnLR-LoRA & 0.10 & \textbf{60.96} \\
\midrule
\multirow{2}{*}{$r=8$}
  & LoRA      & 0.21 & 59.59 \\
  & AnLR-LoRA & 0.21 & \textbf{61.87} \\
\midrule
\multirow{2}{*}{$r=32$}
  & LoRA      & 0.83 & 58.23 \\
  & AnLR-LoRA & 0.83 & \textbf{61.64} \\
\bottomrule
\end{tabular}
}
\caption{
Ablation study on LoRA rank with LLaMA2-7B on math reasoning task. 
}
\label{tab:ablate_rank}
\end{table}

\begin{table*}[htp]
\centering
\resizebox{0.95\linewidth}{!}{%
\begin{tabular}{l cccccccc c}
\toprule
 \textbf{LR Scaling} & \textbf{BoolQ} & \textbf{PIQA} & \textbf{SIQA} & \textbf{HellaSwag} & \textbf{WinoGrande} & \textbf{ARC-e} & \textbf{ARC-c} & \textbf{OBQA} & \textbf{Avg.} \\
\midrule

 Random          & 62.1 & 73.0 & 39.1 & 57.9 & 79.2 & 45.3 & 42.3 & 38.8 & 54.7 \\
$\|\nabla W \odot W\|$ & 62.2 & 83.9 & \textbf{79.5} & 89.5 & 82.7 & 84.2 & 70.9 & 81.2 & 79.3 \\
 AnLR-LoRA  & \textbf{72.5} & \textbf{84.1} & 78.9 & \textbf{91.4} & \textbf{82.8} & \textbf{85.0} & \textbf{71.3} & \textbf{82.4} & \textbf{81.0} \\
\bottomrule
\end{tabular}%
}
\caption{Ablation on per-rank-one LR scaling strategy.}
\label{tab:lr_scaling_ablation}
\end{table*}

\subsection{Per-rank-one LR Scaling Signal}
Tab.~\ref{tab:lr_scaling_ablation} compares three choices of per-rank-one LR multipliers on commonsense reasoning. Random multipliers collapse the average accuracy to $54.7\%$, $26.3\%$ below AnLR-LoRA, confirming that the multipliers must derive from a meaningful training-time signal rather than from random perturbation. The parameter-sensitivity signal $\|\nabla W \odot W\|$~\citep{adalora} recovers most of the performance ($79.3\%$) and outperforms standard LoRA with uniform-LR ($77.5\%$), indicating that other principled signals can also populate the adaptive anisotropic LR framework. AnLR-LoRA's velocity and Adam SNR fusion achieves the highest average accuracy ($81.0\%$), which is $1.7\%$ above the parameter-sensitivity alternative.

\subsection{Robustness to Rank Choice}
Tab.~\ref{tab:ablate_rank} reports the effect of the LoRA rank on math reasoning. AnLR-LoRA improves over LoRA at every rank tested ($r \in \{4, 8, 32\}$) by $2.3\%$ to $3.4\%$, with no additional parameter cost. AnLR-LoRA at $r =4$ ($60.96\%$) already exceeds LoRA at $r = 32$ ($58.23\%$), consistent with our finding that AnLR-LoRA makes broader use of the nominal rank budget.

\begin{figure}[htp]
  \centering
  \begin{subfigure}{0.48\linewidth}
      \centering
      \includegraphics[width=\linewidth]{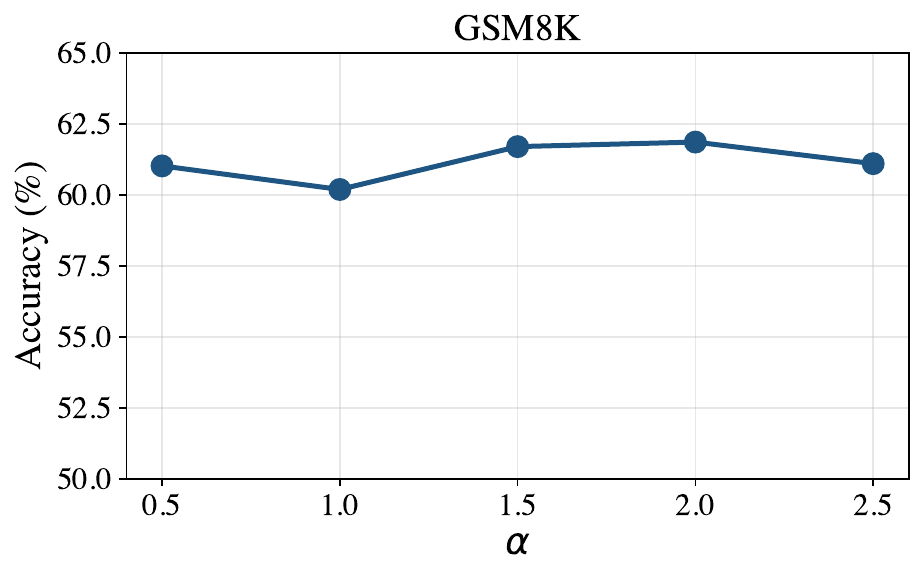}
  \end{subfigure}
  \hfill
  \begin{subfigure}{0.48\linewidth}
      \centering
      \includegraphics[width=\linewidth]{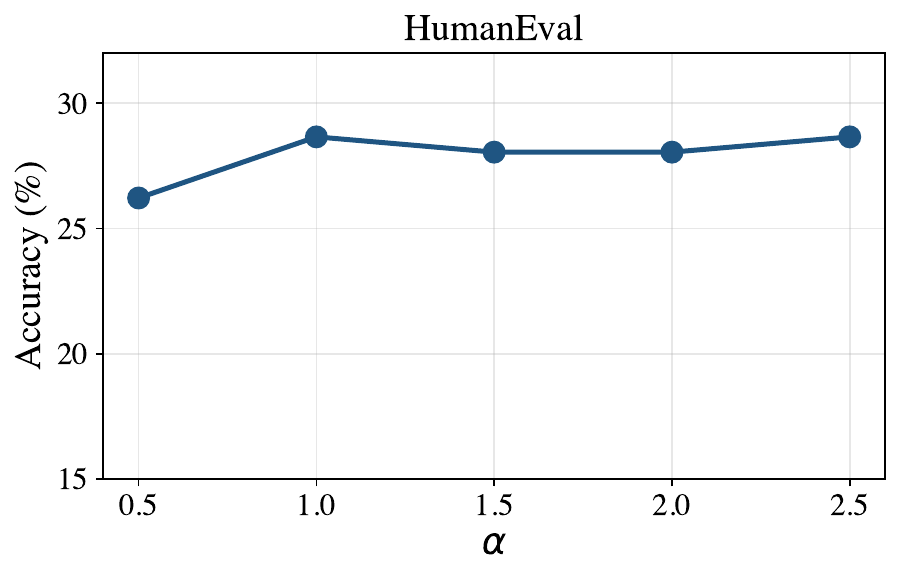}
  \end{subfigure} 
  \caption{Accuracy across the velocity weight $\alpha$ on math reasoning and code generation tasks.}
  \label{fig:alpha_ablation}
\end{figure} 

\begin{figure}[htp]
  \centering
  \begin{subfigure}{0.48\linewidth}
      \centering
      \includegraphics[width=\linewidth]{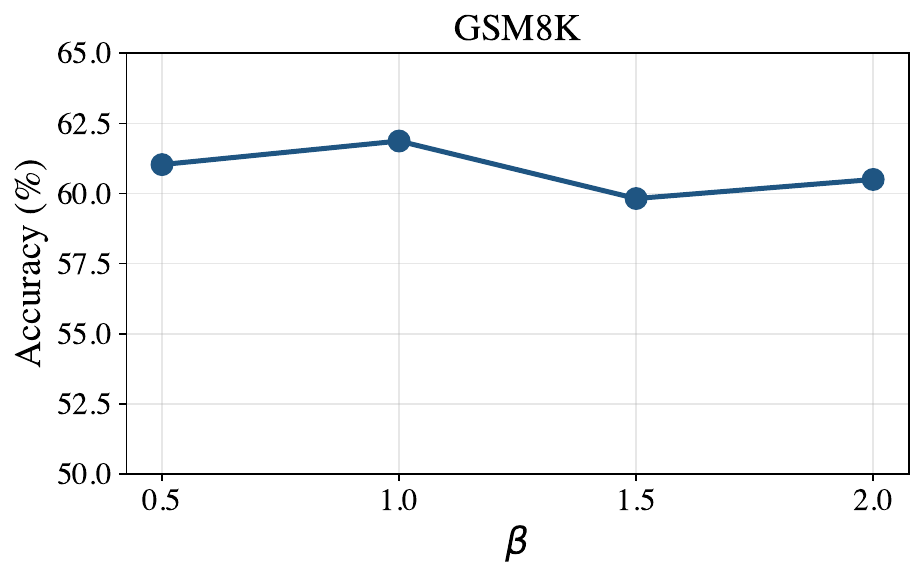}
  \end{subfigure}
  \hfill
  \begin{subfigure}{0.48\linewidth}
      \centering
      \includegraphics[width=\linewidth]{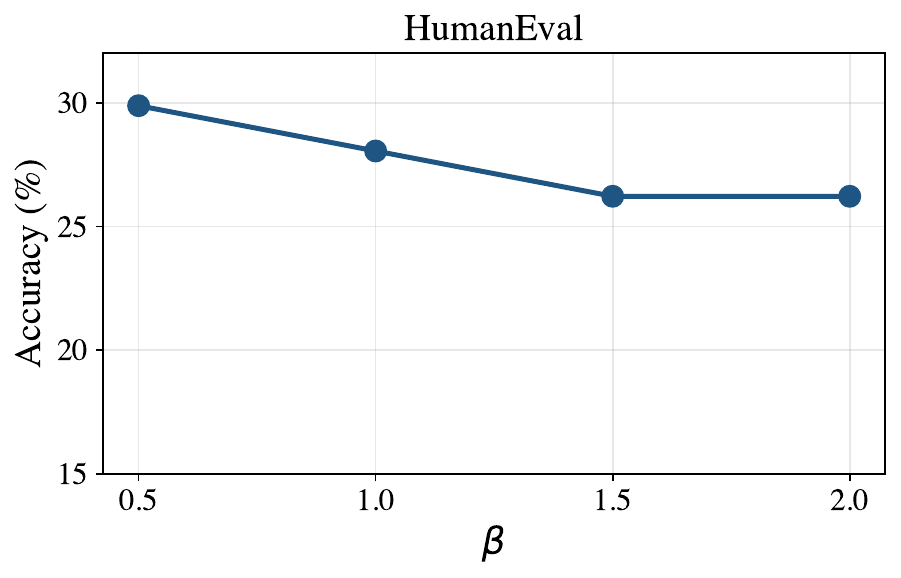}
  \end{subfigure} 
  \caption{Accuracy across the SNR weight $\beta$ on math reasoning and code generation tasks.}
  \label{fig:beta_ablation}
\end{figure} 

\subsection{Robustness to Hyperparameter Choice}
Fig.~\ref{fig:alpha_ablation} and Fig.~\ref{fig:beta_ablation} report AnLR-LoRA's sensitivity to the fusion weights $\alpha$ and $\beta$ on math reasoning and code generation tasks. Across both tasks, performance is relatively stable across the evaluated range of $\alpha \in \{0.5, 1.0, 1.5, 2.0, 2.5\}$ and $\beta \in \{0.5, 1.0, 1.5, 2.0\}$. Fig.~\ref{fig:rho_ablation}, Fig.~\ref{fig:kappa_ablation} and Fig.~\ref{fig:warmup_ablation} further sweep the velocity EMA coefficient $\rho$, the clamp bound $\kappa$ and the warmup length $T_{\mathrm{warm}}$ on commonsense reasoning with LLaMA2-7B and LLaMA3-8B, showing similarly stable results across the evaluated values. AnLR-LoRA is therefore robust to the choice of hyperparameters and does not require careful tuning to achieve strong performance.

\begin{figure}[htp]
  \centering
  \begin{subfigure}{0.48\linewidth}
      \centering
      \includegraphics[width=\linewidth]{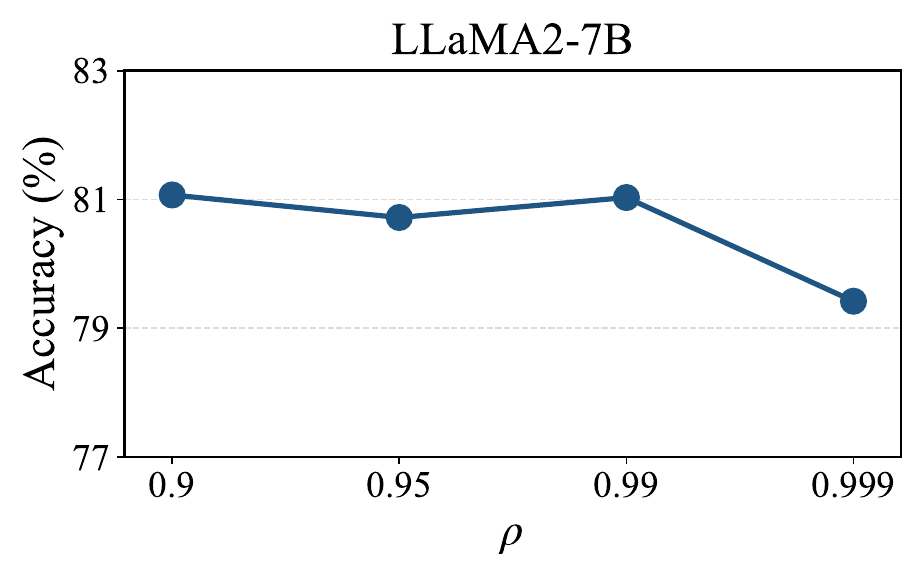}
  \end{subfigure}
  \hfill
  \begin{subfigure}{0.48\linewidth}
      \centering
      \includegraphics[width=\linewidth]{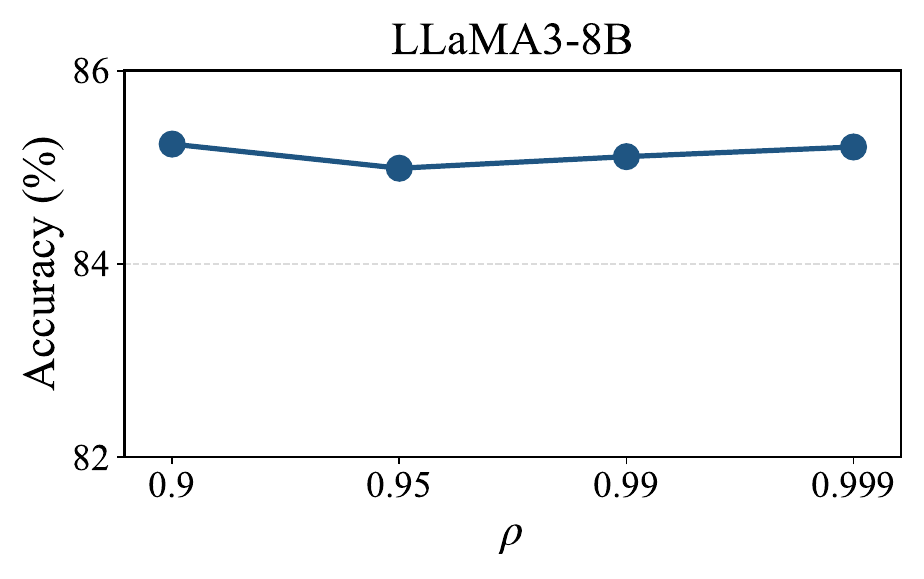}
  \end{subfigure}
  \caption{Average commonsense reasoning accuracy across the velocity EMA coefficient $\rho$.}
  \label{fig:rho_ablation}
\end{figure}

\begin{figure}[h]
  \centering
  \begin{subfigure}{0.48\linewidth}
      \centering
      \includegraphics[width=\linewidth]{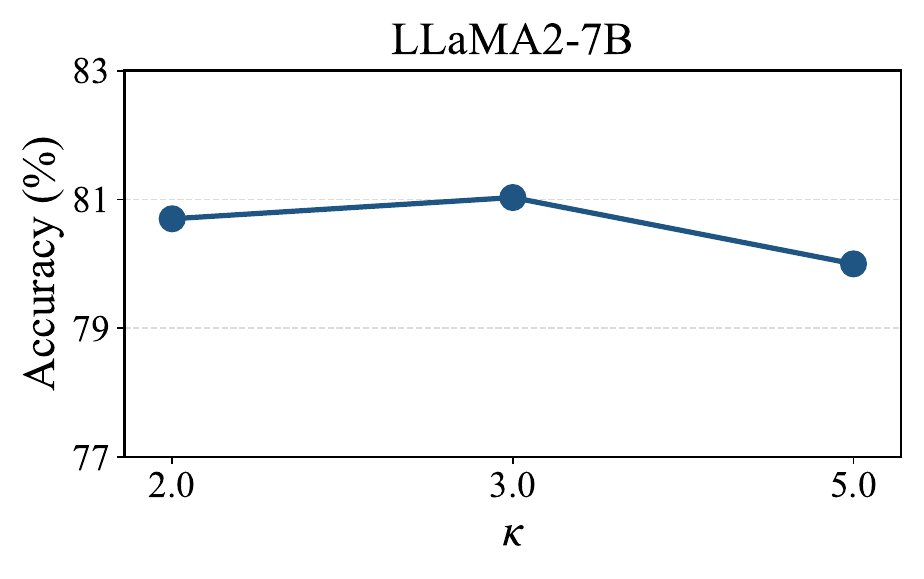}
  \end{subfigure}
  \hfill
  \begin{subfigure}{0.48\linewidth}
      \centering
      \includegraphics[width=\linewidth]{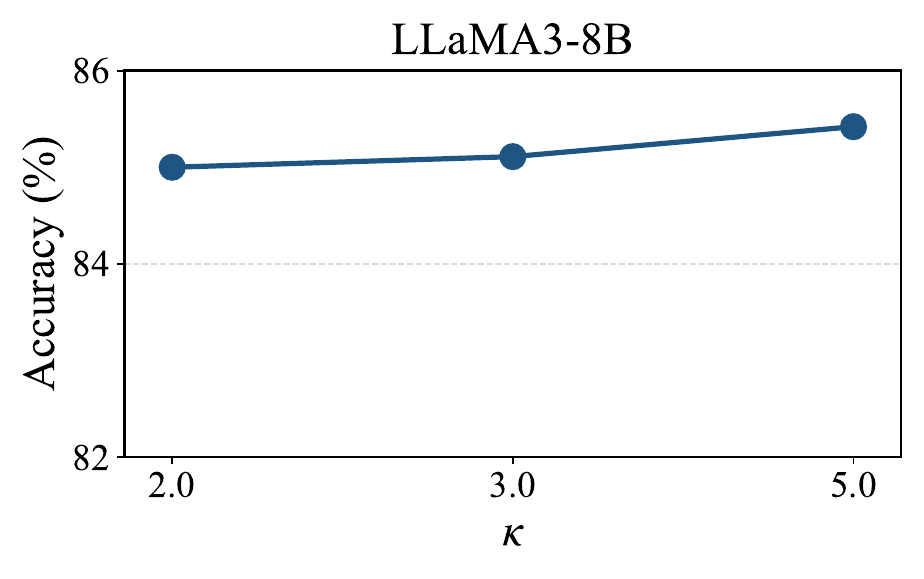}
  \end{subfigure}
  \caption{Average commonsense reasoning accuracy across the clamp bound $\kappa$.}
  \label{fig:kappa_ablation}
\end{figure}

\begin{figure}[h]
  \centering
  \begin{subfigure}{0.48\linewidth}
      \centering
      \includegraphics[width=\linewidth]{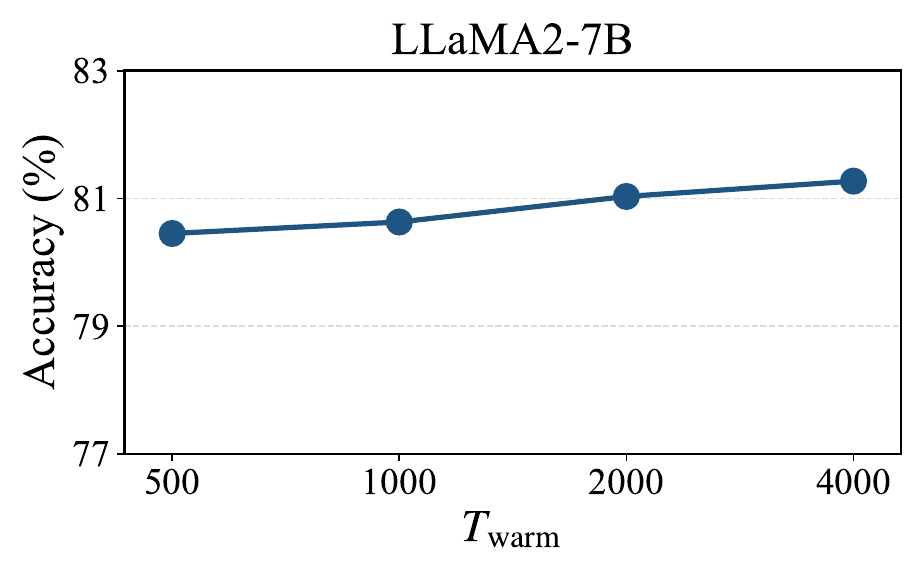}
  \end{subfigure}
  \hfill
  \begin{subfigure}{0.48\linewidth}
      \centering
      \includegraphics[width=\linewidth]{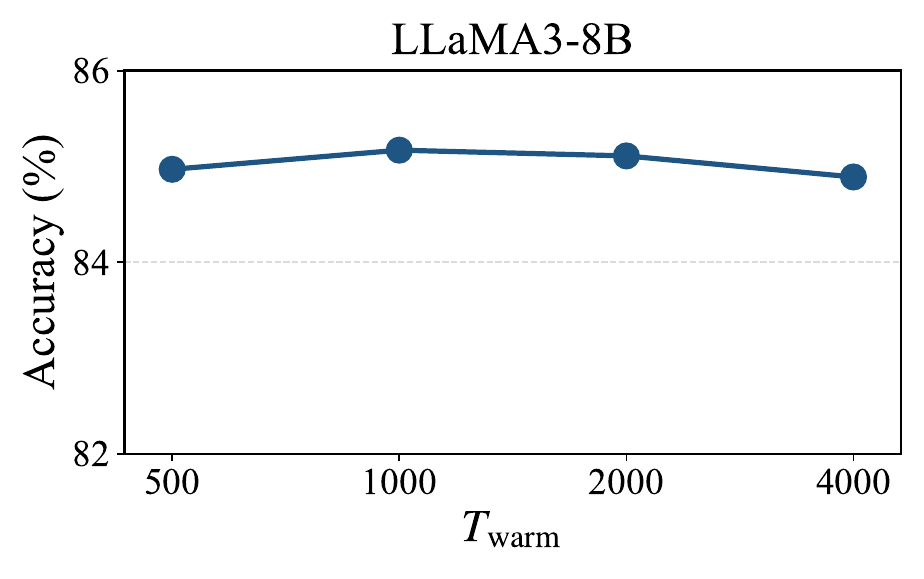}
  \end{subfigure}
  \caption{Average commonsense reasoning accuracy across the warmup length $T_{\mathrm{warm}}$.}
  \label{fig:warmup_ablation}
\end{figure}

\begin{table}[t]
\centering
\resizebox{\linewidth}{!}{%
\begin{tabular}{lccc}
\toprule
\textbf{Method} & \textbf{LLaMA2-7B} & \textbf{LLaMA3-8B} & \textbf{Qwen2.5-7B} \\
\midrule
LoRA      & 77.5 $\pm$ 1.85 & 80.8 $\pm$ 0.43 & 85.7 $\pm$ 0.34 \\
rsLoRA    & 79.0 $\pm$ 1.10 & 81.9 $\pm$ 0.30 & 85.6 $\pm$ 0.11 \\
LoRA+     & 79.4 $\pm$ 0.38 & 84.8 $\pm$ 0.19 & 85.7 $\pm$ 0.38 \\
AnLR-LoRA & \textbf{81.0} $\pm$ 0.44 & \textbf{85.1} $\pm$ 0.14 & \textbf{87.4} $\pm$ 0.13 \\
\bottomrule
\end{tabular}%
}
\caption{Accuracies with standard deviations on commonsense reasoning tasks.}
\label{tab:std1}
\end{table}

\begin{table}[!htbp]
\centering
\resizebox{\linewidth}{!}{%
\begin{tabular}{lccc}
\toprule
\textbf{Method} & \textbf{GSM8K} & \textbf{HumanEval} & \textbf{Visual Instr.\ Tuning} \\
\midrule
LoRA      & 59.59 $\pm$ 0.79 & 24.39 $\pm$ 0.93 & 68.26 $\pm$ 0.13 \\
LoRA+     & 60.27 $\pm$ 0.62 & 24.39 $\pm$ 1.61 & 67.40 $\pm$ 0.14 \\
AnLR-LoRA & \textbf{61.87} $\pm$ 0.53 & \textbf{28.05} $\pm$ 0.93 & \textbf{68.80} $\pm$ 0.10 \\
\bottomrule
\end{tabular}%
}
\caption{Accuracies with standard deviations on natural language generation and visual instruction tuning tasks.}
\label{tab:std2}
\end{table}

\subsection{Standard Deviation Across Seeds}
We conduct all main experiments over three independent runs with different random seeds and report the mean and standard deviation in Tabs.~\ref{tab:std1} and~\ref{tab:std2}. The low variance across seeds indicates that the improvements of AnLR-LoRA over LoRA are robust to the choice of random seed.

\begin{table}[!htp]
\centering
\small
\begin{tabular}{lcc}
\toprule
\textbf{Method} & \textbf{\#Params (\%)} & \textbf{GSM8K} \\
\midrule
LoRA      & 0.16 & 87.49 \\
rsLoRA    & 0.16 & 84.00 \\
LoRA+     & 0.16 & 87.00 \\
AnLR-LoRA & 0.16 & \textbf{88.32} \\
\bottomrule
\end{tabular}
\caption{Evaluation results with Qwen2.5-14B.}
\label{tab:qwen25_14b}
\end{table}

\subsection{Experiments on Qwen2.5-14B}
We further extend the math reasoning evaluation to the larger Qwen2.5-14B backbone (Tab.~\ref{tab:qwen25_14b}). AnLR-LoRA attains the best accuracy, outperforming LoRA, LoRA+ and rsLoRA by 0.83, 1.32 and 4.32 points, respectively. The consistent gain shows that adaptive anisotropic LR remains effective at larger model scales.

\begin{table}[!htp]
\centering
\resizebox{0.95\linewidth}{!}{%
\begin{tabular}{lccc}
\toprule
\textbf{Method} & \textbf{LLaMA2-7B} & \textbf{LLaMA3-8B} & \textbf{Qwen2.5-7B} \\
\midrule
AdaLoRA   & 70.6 & 84.6 & 86.2 \\
DoRA      & 79.6 & 85.0 & 86.8 \\
LoRA-Pro  & 77.6 & 84.9 & 86.2 \\
AnLR-LoRA & \textbf{81.0} & \textbf{85.1} & \textbf{87.4} \\
\bottomrule
\end{tabular}%
}
\caption{Comparison with additional LoRA variants on commonsense reasoning tasks with average accuracy reported.}
\label{tab:more_baselines}
\end{table}

\subsection{Comparison with Additional LoRA Variants}

Tab.~\ref{tab:more_baselines} extends the commonsense reasoning comparison to more LoRA variants, including AdaLoRA~\citep{adalora}, DoRA~\citep{dora} and LoRA-Pro~\citep{lorapro}, trained under the protocol of Tab.~\ref{tab:commonsense} with their official configurations. AdaLoRA adaptively allocates the rank budget across modules by importance-based pruning of singular triplets. DoRA decomposes the adapted weight into a learnable magnitude and a LoRA-updated direction. LoRA-Pro adjusts the gradients to better approximate the full fine-tuning update, requiring per-module matrix inversions and the solution of a Sylvester equation at each optimization step. AnLR-LoRA achieves the best performance on every backbone without introducing additional trainable parameters or expensive matrix operations, relying only on lightweight rescaling of the optimizer updates.

\subsection{Analysis on Memory Overhead of the Parameter Snapshot}
Post-step delta scaling maintains a transient snapshot of the LoRA factors $A$ and $B$ within each optimizer step. The snapshot introduces no trainable parameters, and its size is determined by the adapter rather than by the base model. On LLaMA2-7B, it occupies 56 MB, corresponding to 0.42\% of the base-weight memory. This overhead decreases further for larger backbones, with an estimated 0.20\% at the 70B scale and 0.10\% at the 405B scale. The snapshot can also be eliminated entirely by fusing the rescaling into the optimizer step.

\section{LLM Usage}
\label{app:llm_usage}
AI assistants were used only for language refinement during manuscript preparation. They were not involved in proposing the core method, designing the experiments, or drawing scientific conclusions. Since this work studies parameter-efficient finetuning for large language models, pretrained models including LLaMA2-7B, LLaMA3-8B, Qwen2.5-7B, Qwen2.5-14B and LLaVA-1.5-7B were used as part of the experimental setup and evaluation.

\end{document}